\newtheorem{theorem}{Theorem}[section]
\newtheorem{corollary}[theorem]{Corollary}
\newtheorem{proposition}[theorem]{Proposition}
\newcommand{\rvc}[2]{
\textcolor{red}{#1}%
    \IfNoValueF {#2} {\protect\footnote{\textcolor{red}{#2}}%
    }%
}
\begin{document}

\title{A Perturbation-Based Kernel Approximation Framework}

\author[1]{Roy Mitz\thanks{roymitz@mail.tau.ac.il} }
\author[1]{Yoel Shkolnisky\thanks{yoelsh@post.tau.ac.il}}
\affil[1]{\footnotesize{School of Mathematical sciences, Tel-Aviv University, Tel-Aviv, Israel}}
\date{}

\maketitle

\begin{abstract}%

Kernel methods are powerful tools in various data analysis tasks. Yet, in many cases, their time and space complexity render them impractical for large datasets. Various kernel approximation methods were proposed to overcome this issue, with the most prominent method being the Nystr{\"o}m method. In this paper, we derive a perturbation-based kernel approximation framework building upon results from classical perturbation theory. We provide an  error analysis for this framework, and prove that in fact, it generalizes the Nystr{\"o}m method and several of its variants. Furthermore, we show that our framework gives rise to new kernel approximation schemes, that can be tuned to take advantage of the structure of the approximated kernel matrix. We support our theoretical results numerically and demonstrate the advantages of our approximation framework on both synthetic and real-world data.
\end{abstract}

\textbf{Key Words.} perturbation theory, kernel approximation, kernel-based non-linear dimensionality reduction, Nystr{\"o}m method

\section{Introduction}

In the last decades, kernel methods became widely used in various data analysis tasks. Two notable examples that are widely used for machine learning are  kernel SVM~\cite{cortes1995support} and kernel ridge regression~\cite{saunders1998ridge}. Another common application of kernel methods is non-linear dimensionality reduction, where the lower-dimensional embedding of the data is derived from the eigenvectors of some data-dependent kernel matrix. Examples of such dimensionality reduction algorithms include Laplacian eigenmaps~\cite{belkin2003laplacian}, LLE~\cite{roweis2000nonlinear}, Isomap~\cite{balasubramanian2002isomap}, MDS~\cite{buja2008data}, Spectral clustering~\cite{shi2000normalized, ng2002spectral}, and more. In all of the aforementioned kernel-based methods, the dimension of the kernel matrix grows linearly with the number of data points~$n$. This makes kernel-related algorithms impractical for large $n$, both in terms of memory and runtime. To overcome this issue, several methods for kernel approximation were proposed.

The most common form of kernel approximation is low-rank kernel approximation, where we seek to find a low-rank representation of the kernel matrix. It is known that the best rank-$m$ approximation of a matrix in the spectral and Frobenius norms is given by truncating its SVD decomposition. However, since the dimension of the kernel matrix grows with the number of data points, the computation of the full or even the partial eigendecomposition of a large kernel matrix is impractical due to its runtime and space requirements. For example, algorithms for partial eigendecomposition such as the Lanczos algorithm and some variants of SVD require $O(n^2m)$ floating point operations, where $n$ is the dimension of the matrix (number of data points) and $m$ is the number of eigenvectors calculated. Randomized algorithms~\cite{halko2011algorithm, halko2011finding} use random projections of the data to reduce the time complexity of the decomposition to $O(n^2 \log m)$, which is still impractical for large~$n$. Moreover, all eigendecomposition algorithms require to store the $n \times n$ kernel matrix either in RAM or on disk. 

Since the exact calculation of the low-rank decomposition of a kernel matrix is unfeasible for large datasets, several approximated methods were proposed. The most prominent low-rank kernel approximation method is the Nystr{\"o}m method~\cite{williams2001using}, that will be described in detail in the next section. Some variants of the Nystr{\"o}m method were proposed in literature: the Randomized SVD Nystr{\"o}m method~\cite{li2014large}, which uses efficient randomized eigensolvers that enable it to use more samples in order to improve performance; the ensemble Nystr{\"o}m method~\cite{kumar2009ensemble}, which averages several Nystr{\"o}m approximations in order to improve accuracy (we note that this is in fact not a low-rank approximation); the spectral shifted Nystr{\"o}m method~\cite{wang2014improving}, which provides superior performance in cases where the spectrum of the matrix decays slowly; and the modified Nystr{\"o}m method~\cite{wang2013improving}. The latter usually outperforms the classical Nystr{\"o}m method, but is impractical for large datasets due to its time and memory requirements. To alleviate these requirements, \cite{wang2014efficient} suggested a faster version of the modified Nystr{\"o}m method.

When the approximated kernel is not low-rank, the kernel approximation methods mentioned above may result in a poor approximation. More recently, works that use the structure of the kernel matrix were proposed. For example, the MEKA algorithm~\cite{si2017memory} provides superior kernel approximation for kernels that admit a block-diagonal structure, and are not necessarily low-rank.

A problem related to low-rank approximation that is relevant to this paper is updating a known eigendecomposition of a matrix following a ``small'' perturbation, without calculating the entire decomposition from scratch. Classical perturbation results~\cite{stewart1990matrix, byron2012mathematics} exist for a general symmetric perturbation, and will be described in detail in the next section. Other related works consider perturbations that have some structure; see for example,~\cite{bunch1978rank,mitz2019symmetric} for the case where the perturbation is of rank one, and~\cite{oh2018multiple,brand2006fast} for a general low-rank perturbation. Other approaches of updating a given  eigendecomposition include restarting the power method~\cite{langville2006updating} or the inverse iteration algorithm~\cite{trefethen1997numerical}, both require applying the updated matrix several times until convergence, which may be expensive if the matrix is large. 

The contribution of the current paper is threefold. First, we derive eigendecomposition perturbation formulas accompanied by error bounds for matrices that only part of their spectrum is known. Second, we use these perturbation formulas to derive a new framework for kernel approximation. Unlike some of the existing methods, we present explicit error bounds for the individual approximated eigenvectors rather than only to the kernel approximation. Third, we prove that the Nystr{\"o}m method and its variants are in fact special cases of our framework. This reveals the essence behind existing Nystr{\"o}m methods, allows to analyze their accuracy, and provides means to derive new Nystr{\"o}m-type approximations that take advantage of the structure of the kernel matrix. 

The rest of this paper is organized as follows. In Section~\ref{sec:prem}, we describe classical perturbation results, along with the Nystr{\"o}m method and some of its variants. In Section~\ref{sec:trunc_pert}, we extend the classical perturbation formulas to the case where only part of the spectrum of the perturbed matrix is known, and derive their error bounds. In Section~\ref{sec:the_extension_framework}, we use these formulas to develop a perturbation-based kernel approximation framework. In Section~\ref{sec:nys_is_pert}, we prove that our kernel approximation framework generalizes the Nystr{\"o}m method. In Section~\ref{sec:applications}, we suggest several types of kernesl approximations based on our framework, and prove that some of them are related to variants of the Nystr{\"o}m method. In Section~\ref{sec:numerical}, we provide numerical results to support our theory and show the advantages of our kernel approximation framework. In Section~\ref{sec:summary}, we provide some concluding remarks and discuss future research.

\section{Preliminaries} \label{sec:prem}

In this section, we describe two methods for approximating the eigendecomposition of a matrix that are relevant to our work. We first describe the Nystr{\"o}m method in Section~\ref{sec:preliminaries_nystrom}, and then describe the perturbation method in Section~\ref{sec:pert_eig}.

\subsection{The Nystr{\"o}m method and its variants}\label{sec:preliminaries_nystrom}

In this section, we describe in detail some of the Nystr{\"o}m-type methods discussed in the Introduction. We begin with describing the classical Nystr{\"o}m method, continue with some of its variants that are particularly relevant to our work, and finish by discussing some results regarding the error bounds of the Nystr{\"o}m method.

\subsubsection{The classical Nystr{\"o}m method}\label{sec:preliminaries_nystrom_base}

Let $K \in \mathbb{R}^{n \times n}$ be a symmetric positive-definite matrix. We wish to find the~$k$ leading eigenpairs $\{(\lambda_i,u_i)\}_{i=1}^k$ of~$K$. The Nystr{\"o}m method~\cite{sun2015review,williams2001using} finds an approximation $\{(\widetilde{\lambda}_i,\widetilde{u}_i)\}_{i=1}^k$ to these eigenpairs as follows. First, we select randomly $k$~columns of~$K$ (typically uniformly without replacement). We assume, without loss of generality, that the columns and rows of~$K$ are rearranged so that the first~$k$ columns of~$K$ are sampled. Denote by $K'$ the $k \times k$ upper-left submatrix of $K$ and by~$C$ the $n \times k$ matrix consisting of the first~$k$ columns of~$K$. Then, we calculate the~$k$ eigenpairs of~$K'$ and denote them by $\{(\lambda_i', u_i' )\}_{i=1}^{k}$. Finally, the Nystr{\"o}m method approximates the $k$ leading eigenvectors of $K$ by
\begin{equation} \label{eq:nystrom}
\widetilde{u}_i = \sqrt{\frac{k}{n}} \frac{1}{\lambda_i'}Cu_i', \quad  i=1,\ldots,k. 
\end{equation}
Moreover, the $k$ leading eigenvalues of $K$ are approximated by 
\begin{equation} \label{eq:nystrom_vals}
\widetilde{\lambda}_i = \frac{n}{k} \lambda_i', \quad i=1,\ldots,k.
\end{equation}
Finally, the Nystr{\"o}m approximation of $K$ is
\begin{equation} \label{eqn:nys_k_app}
\widetilde{K}_{\text{nys}} = \sum_{i=1}^{k} \widetilde{\lambda}_i \widetilde{u}_i^{T} \widetilde{u}_i . 
\end{equation}
The runtime complexity of the Nystr{\"o}m method (Formula~\eqref{eq:nystrom}) is $O(nk^2 + k^3)$.

The Nystr{\"o}m method requires sampling a representative subset of the data and hence many methods for sampling this subset were proposed, see~\cite{kumar2012sampling,sun2015review} and the references therein. Our results derived below are independent of the methodology used to obtain the subset of samples, and hence we do not discuss this issue in detail.

\subsubsection{Variants of the Nystr{\"o}m method}\label{sec:preliminaries_nystrom_var}

One generalization of the Nystr{\"o}m method is the Randomized SVD Nystr{\"o}m method~\cite{li2014large}. It introduces a parameter $l \geq k$ and chooses $K'$ to be the $l \times l$ top-left submatrix of $K$. Then, the $k$ leading eigenpairs of $K'$ are calculated via some efficient randomized SVD method, followed by the use of~\eqref{eq:nystrom} and~\eqref{eq:nystrom_vals} for the approximation. This form of approximation is a generalization of the Nystr{\"o}m method, since choosing $l = k$ is equivalent to the Nystr{\"o}m method. On the other hand, if we choose $l = n$, we get the exact eigenvectors of $K$. Intuitively, the larger $l$ is, the better the approximation is likely to be, at the cost of a greater computational complexity. The runtime complexity of this method is $O(nk^2 + lk^2)$. 

Since the Nystr{\"o}m approximation~\eqref{eqn:nys_k_app} of the kernel matrix $K$ is a low-rank approximation, it may provide poor results when $K$ is not low-rank. This might occur, for example, when the spectrum of $K$ decays slowly. A possible approach to overcome this problem is the spectrum shifted Nystr{\"o}m method~\cite{wang2014improving}. This method essentially applies the classical Nystr{\"o}m method on a shifted kernel matrix, that is, applies the Nystr{\"o}m method on 
\begin{equation} \label{eq:shift}
K_{\text{shift}} = K - \mu I,
\end{equation} 
for some $\mu \geq 0$. The updated eigenvalues~\eqref{eq:nystrom_vals} are then shifted-back by $\mu$. If we denote by~$\widetilde{K}_{\text{shift}}$ the kernel approximation obtained by using the shifted Nystr{\"o}m method, it is shown in~\cite{wang2014improving} that
\begin{equation*}
    \norm{K - \widetilde{K}_{\text{shift}}}_F \leq \norm{K - \widetilde{K}_{\text{nys}}}_F .
\end{equation*}

Alternatively, the kernel $K$ may admit a block-diagonal structure. As demonstrated in~\cite{si2017memory}, this may happen for some kernel functions when the data consist of several clusters. In this case, the MEKA algorithm~\cite{si2017memory} essentially performs a Nystr{\"o}m approximation on each cluster of the data. Each such approximation corresponds to a block on the diagonal of the kernel matrix, and the resulting approximation is block-diagonal.

An approach related to the MEKA algorithm for improving the Nystr{\"o}m approximation~\eqref{eqn:nys_k_app} is the ensemble Nystr{\"o}m method~\cite{kumar2009ensemble}. The idea behind this method is to perform $q$ independent Nystr{\"o}m kernel approximations on random subsets of the data, and then average them. Formally, given $q$ independent Nystr{\"o}m approximations $\{ \widetilde{K}_i \}_{i=1}^q$, the ensemble Nystr{\"o}m approximation is given by
\begin{equation*}
\widetilde{K}_{\text{ens}} = \sum_{i=1}^q \mu_i \widetilde{K}_i ,
\end{equation*}
for some weights $\{\mu_i\}_{i=1}^q$. It is suggested in~\cite{kumar2009ensemble} to use $\mu_i = \frac{1}{q}$ for $1 \leq i \leq q$. Better error bounds for this method compared to the classical Nystr{\"o}m method are proven in~\cite{kumar2009ensemble}. 

The difference between the ensemble Nystr{\"o}m method and the MEKA algorithm is that in the former, the individual Nystr{\"o}m approximations are chosen at random rather than by clusters, and the resulting approximation is their average rather than their concatenation in a block-diagonal matrix. We note that in both the ensemble Nystr{\"o}m method and the MEKA algorithm, the resulting approximation is not low-rank, and is generally of a rank greater than $k$.

\subsubsection{Error bounds for the Nystr{\"o}m method }\label{sec:preliminaries_nystrom_err}

The error bounds of the Nystr{\"o}m method and its variants have been of great interest. For a comprehensive review we refer the readers to~\cite{sun2015review},~\cite{wang2013improving} and \cite{gittens2013revisiting}. If we denote by $K_m$ the best rank-$m$ approximation of a kernel  matrix~$K$, then all error bounds obtained in the literature are of the forms
\begin{equation*}
    \norm{K - \widetilde{K}_{\text{nys}}} \leq \alpha \norm{K - K_m} \quad \text{or} \quad \norm{K - \widetilde{K}_{\text{nys}}} \leq \norm{K - K_m} + \beta,
\end{equation*}
for some $\alpha,\beta \geq 0$, where $\norm{\cdot}$ is usually the Frobenius norm or the $2$-norm. To the best of our knowledge, no error bounds were obtained for the individual Nystr{\"o}m-approximated eigenvectors, though such bounds may be of interest when using the Nystr{\"o}m method as  part of a dimensionality reduction algorithm. 

\subsection{Perturbation of eigenvalues and eigenvectors} \label{sec:pert_eig}

Let $A' \in \mathbb{R}^{n \times n}$ be a real symmetric positive-definite matrix with distinct eigenvalues $\{t_i\}_{i=1}^n$ and their corresponding orthonormal eigenvectors $\{v_i\}_{i=1}^n$. Assume that $t_1 > t_2 > \cdots > t_n$. Let $E \in \mathbb{R}^{n \times n}$ a real symmetric matrix. Consider a perturbation $A$ of $A'$ given by $A=A'+E$, with the eigenpairs of~$A$ denoted by $\{(s_i,w_i)\}_{i=1}^n$, so that $s_1 > s_2 > \cdots > s_m$. We wish to find an approximation $\{(\widetilde{s}_i,\widetilde{w}_i)\}_{i=1}^n$ to the eigenpairs of $A$. The classical perturbation solution to this problem~\cite{stewart1990matrix, byron2012mathematics} is as follows. The approximated eigenvectors of $A$ are given by
\begin{equation} \label{eqn:pert_org_vecs}
\widetilde{w}_{i} = v_{i} + \sum_{k=1, k \neq i}^{n} \frac{ \langle Ev_i,v_k \rangle }{t_i-t_k}v_k + O(\norm{E}^2_2), \quad 1 \leq i \leq n,
\end{equation}
where $\langle \cdot,\cdot \rangle$ is the standard dot product between vectors, and the approximated eigenvalues of $A$ are given by
\begin{equation} \label{eqn:pert_org_vals}
\widetilde{s}_i = t_i + v_i^TEv_i + O(\norm{E}^2_2), \quad 1 \leq i \leq n .
\end{equation}
We note that the eigenvalues update formula~\eqref{eqn:pert_org_vals} depends only on the eigenvalue we wish to update and its corresponding eigenvector, whereas the eigenvectors update formula~\eqref{eqn:pert_org_vecs} depends on all eigenvalues and eigenvectors of $A'$.

There exist perturbation results for matrices with non-simple eigenvalues~\cite{byron2012mathematics}. However, since non-simple eigenvalues are highly unlikely in data-dependent matrices, we do not discuss this case and leave it for a future work.

\section{Truncating the perturbation formulas} \label{sec:trunc_pert}

In this section, we consider a variant of the problem presented in Section~\ref{sec:pert_eig}, in which only the $m$ leading eigenpairs $\{(t_i,v_i)\}_{i=1}^m$ of the unperturbed matrix $A'$ are known, and we wish to approximate the $m$ leading eigenpairs of $A$. To this end, we introduce a parameter $\mu \in \mathbb{R}$ whose purpose is to approximate the unknown eigenvalues $\{t_i\}_{i={m+1}}^n$ of $A'$. We denote by $V^{(m)}$ the $n \times m$ matrix consisting of the $m$ leading eigenvectors of $A'$, and define
\begin{equation} \label{eqn:r}
    r_i = \Big( I - V^{(m)}V^{(m)T} \Big) Ev_i ,
\end{equation}
for $1 \leq i \leq m$.

We derive two approximation formulas based on the classical perturbation formula~\eqref{eqn:pert_org_vecs}. These two approximation formulas differ in their order of approximation and in their computational complexity. The first formula, which we refer to as the first-order truncated perturbation formula, provides a first-order approximation to the eigenvectors of $A$, whereas the second formula, which we refer to as the second-order truncated perturbation formula, provides a second-order approximation to the eigenvectors of $A$. We describe these approximations in the following propositions.

\begin{proposition}[First-order approximation] \label{prop:pert_partial_1} 

Let $1 \leq i 
\leq m$. Let $\mu \in \mathbb{R}$ a parameter. Using the notation of Section~\ref{sec:pert_eig}, the first-order approximation to $w_i$ is given by
\begin{equation} \label{eq:pert_expansion}
\widetilde{w}_{i}^{(1)} = v_{i} + \sum_{k=1, k \neq i}^{m} \frac{ \langle Ev_i,v_k \rangle }{t_i-t_k}v_k  +  \frac{1}{t_i- \mu} r_i ,
\end{equation}
with an error satisfying
\begin{equation} \label{eqn:error_trunc_1}
    \norm{w_i - \widetilde{w}_i^{(1)}}_2 \leq \frac{ \sum_{k=m+1}^n \abs{t_k - \mu}  }{\abs{t_i - t_{m+1}}\abs{t_i - \mu}} \norm{E}_2 + O \big( \norm{E}^2_2 \big).
\end{equation}

\end{proposition}

\begin{proposition}[Second-order approximation] \label{prop:pert_partial_2}
Let $1 \leq i 
\leq m$. Let $\mu \in \mathbb{R}$ a parameter. Using the notation of Section~\ref{sec:pert_eig}, the second-order approximation to $w_i$ is given by
\begin{equation} \label{eq:pert_expansion2}
\widetilde{w}_{i}^{(2)} = v_{i} + \sum_{k=1, k \neq i}^{m} \frac{ \langle Ev_i,v_k \rangle }{t_i-t_k}v_k  +  \frac{1}{t_i- \mu} r_i -  \frac{\mu}{(t_i- \mu)^2} r_i +  \frac{1}{(t_i- \mu)^2} A'r_i,
\end{equation}
with an error satisfying
\begin{equation} \label{eqn:error_trunc_2}
    \norm{w_i - \widetilde{w}_i^{(2)}}_2 \leq \frac{ \sum_{k=m+1}^n \abs{t_k - \mu}^2  }{\abs{t_i - t_{m+1}}\abs{t_i - \mu}^2} \norm{E}_2 + O \big( \norm{E}^2_2 \big).
\end{equation}

\end{proposition}

The proofs of Proposition~\ref{prop:pert_partial_1} and Proposition~\ref{prop:pert_partial_2} are given in Appendix~\ref{app1} and Appendix~\ref{app2}, respectively. We discuss the runtime and memory requirements of formulas~\eqref{eq:pert_expansion} and~\eqref{eq:pert_expansion2} in Appendix~\ref{app:runtime}.

The update formulas~\eqref{eq:pert_expansion} and~\eqref{eq:pert_expansion2} depend on a parameter $\mu$, whose choice is discussed in~\cite{mitz2019symmetric}. To be concrete, if $A'$ is known to be low-rank, we use $\mu = 0$. When $A'$ is not low-rank, and especially if its spectrum is known to decay slowly, we follow~\cite{mitz2019symmetric} and suggest to use
\begin{equation} \label{eqn:mu_mean}
     \mu_{\text{mean}} = \frac{\text{trace}(A') - \sum_{i=1}^{m}t_i}{n-m},
\end{equation}
which is the mean of the unknown eigenvalues of the perturbed matrix $A'$. $\mu_{\text{mean}}$ can be easily computed since the trace of $A'$ and its $m$ leading eigenvalues are known.

We conclude this section by proving that under a certain assumption on $A'$, the first-order truncated perturbation formula~\eqref{eq:pert_expansion} and the second-order truncated perturbation formula~\eqref{eq:pert_expansion2} coincide. Furthermore, in this case, the $O \big( \norm{E}_2 \big) $ term in the error bound of both approximations cancels out, as stated by the following proposition.

\begin{proposition} \label{prop:order1_is_order2}
Let $\delta \geq 0$ and assume that $A'$ can be written in the form of a low-rank matrix plus a spectrum shift, that is $A' = V^{(m)}TV^{(m)T} + \delta I$ for some diagonal matrix $T \in \mathbb{R}^{m \times m}$. Then, for $\mu = \delta$, the first-order truncated perturbation formula~\eqref{eq:pert_expansion} and the second-order truncated perturbation formula~\eqref{eq:pert_expansion2} coincide, that is
\begin{equation*}
    \widetilde{w}_{i}^{(1)} = \widetilde{w}_{i}^{(2)},
\end{equation*}
and the approximation errors satisfy
\begin{equation*}
    \norm{w_i - \widetilde{w}_{i}^{(1)}}_2 = \norm{w_i - \widetilde{w}_{i}^{(2)}}_2 = O \big( \norm{E}^2_2 \big),
\end{equation*}
for all $1 \leq i \leq m$.
\end{proposition}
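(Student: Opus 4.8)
The plan is to exploit the very specific structure $A' = V^{(m)}TV^{(m)T} + \delta I$ to show that the unknown part of the spectrum is trivial and that the residual vector $r_i$ is itself an eigenvector of $A'$; once these two facts are in place, the extra terms distinguishing the second order formula~\eqref{eq:pert_expansion2} from the first order formula~\eqref{eq:pert_expansion} collapse, and the leading error terms vanish. First I would read off the spectral consequences of the assumption. Applying $A'$ to a column $v_i$ of $V^{(m)}$ and using orthonormality ($V^{(m)T}v_i = e_i$ and $V^{(m)T}V^{(m)} = I_m$) gives $A'v_i = (T_{ii} + \delta)v_i$, so $t_i = T_{ii} + \delta$. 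Crucially, for any vector $z$ orthogonal to the range of $V^{(m)}$ we have $V^{(m)T}z = 0$, and hence $A'z = \delta z$. In particular every unknown eigenvalue satisfies $t_k = \delta$ for $m+1 \leq k \leq n$, so the tail eigenspace is exactly the eigenspace of $A'$ for the eigenvalue $\delta$.

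Next I would note that $r_i = (I - V^{(m)}V^{(m)T})Ev_i$ lies in the orthogonal complement of the range of $V^{(m)}$, since $V^{(m)T}r_i = (V^{(m)T} - V^{(m)T}V^{(m)}V^{(m)T})Ev_i = 0$. Combined with the previous step this yields $A'r_i = \delta r_i$. Setting $\mu = \delta$ and substituting $A'r_i = \delta r_i$ into the final three terms of~\eqref{eq:pert_expansion2}, I obtain $\tfrac{1}{t_i-\delta}r_i - \tfrac{\delta}{(t_i-\delta)^2}r_i + \tfrac{\delta}{(t_i-\delta)^2}r_i = \tfrac{1}{t_i-\delta}r_i$, so the last two terms cancel exactly and what remains is precisely the residual term $\tfrac{1}{t_i-\mu}r_i$ appearing in~\eqref{eq:pert_expansion}. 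Since the first two summands of the two formulas already coincide, this establishes $\widetilde{w}_i^{(1)} = \widetilde{w}_i^{(2)}$ for every $1 \leq i \leq m$.

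Finally, for the error bounds I would substitute $t_k = \delta = \mu$ into the leading terms of~\eqref{eqn:error_trunc_1} and~\eqref{eqn:error_trunc_2}. The numerators contain the factor $\sum_{k=m+1}^n \abs{t_k - \mu}$ (respectively its square), both of which are zero under the assumption, so the $O(\norm{E}_2)$ contributions disappear and each bound reduces to $O(\norm{E}_2^2)$, giving $\norm{w_i - \widetilde{w}_i^{(1)}} = \norm{w_i - \widetilde{w}_i^{(2)}} = O(\norm{E}_2^2)$ as claimed. I do not anticipate a genuine obstacle here; the only point requiring care is the clean justification that $r_i$ is an eigenvector of $A'$ with eigenvalue $\delta$, since that single observation is what drives both the collapse of the formulas and the annihilation of the first order error terms.
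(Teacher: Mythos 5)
Your proposal is correct and follows essentially the same route as the paper's proof in Appendix~\ref{app3}: the key step in both is that $A'r_i = \delta r_i$ (you derive it via $r_i \perp \operatorname{range}(V^{(m)})$ while the paper expands the product directly, but these are the same observation), which collapses the last two terms of~\eqref{eq:pert_expansion2}, and the vanishing of $\sum_{k=m+1}^n \abs{t_k - \mu}$ kills the first-order error term. No gaps.
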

The proof of Proposition~\ref{prop:order1_is_order2} is given in Appendix~\ref{app3}.
\begin{corollary} \label{col:first_is_second}
If $A'$ is of rank $m$, and $\mu = 0$ in~\eqref{eq:pert_expansion} and~\eqref{eq:pert_expansion2}, then the first-order and second-order truncated perturbation formulas give rise to the same approximation. The error of this approximation is $ O \big( \norm{E}_2^2 \big)$.
\end{corollary}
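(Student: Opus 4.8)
The plan is to obtain the corollary as the special case $\delta = 0$ of Proposition~\ref{prop:order1_is_order2}. First I would observe that a real symmetric matrix $A'$ of rank exactly $m$ has precisely $m$ nonzero eigenvalues, namely its leading eigenvalues $t_1 > \cdots > t_m$, while all remaining eigenvalues vanish: $t_{m+1} = \cdots = t_n = 0$. This is immediate from the rank--nullity relation applied to the (diagonalizable) symmetric matrix $A'$. Consequently its spectral decomposition collapses to $A' = \sum_{i=1}^m t_i v_i v_i^T = V^{(m)} T V^{(m)T}$ with $T = \diag(t_1,\ldots,t_m)$, which is exactly the hypothesis $A' = V^{(m)} T V^{(m)T} + \delta I$ of Proposition~\ref{prop:order1_is_order2} with $\delta = 0$.

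Since the corollary fixes $\mu = 0$, we have $\mu = \delta = 0$, so the conditions of Proposition~\ref{prop:order1_is_order2} are met verbatim. Invoking that proposition then yields at once $\widetilde{w}_i^{(1)} = \widetilde{w}_i^{(2)}$ for all $1 \leq i \leq m$, together with the error estimate $\norm{w_i - \widetilde{w}_i^{(1)}} = \norm{w_i - \widetilde{w}_i^{(2)}} = O\big(\norm{E}_2^2\big)$, which is precisely the assertion of the corollary.

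Because every step is a direct substitution into an already-proved proposition, there is essentially no obstacle; the only point requiring (minor) care is the rank-to-spectrum observation in the first paragraph. As an independent sanity check that also explains the underlying mechanism, one can verify the equality of the two formulas directly: with $\mu = 0$, the terms that distinguish \eqref{eq:pert_expansion2} from \eqref{eq:pert_expansion} combine into $\frac{1}{t_i^2}(A' - \mu I) r_i = \frac{1}{t_i^2} A' r_i$, and since $r_i = \big(I - V^{(m)} V^{(m)T}\big) E v_i$ lies in the orthogonal complement of the column space of $A' = V^{(m)} T V^{(m)T}$, the identity $V^{(m)T}\big(I - V^{(m)} V^{(m)T}\big) = 0$ (using $V^{(m)T} V^{(m)} = I$) forces $A' r_i = 0$. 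Hence the extra second-order terms vanish and the two approximations coincide, in agreement with the statement obtained from Proposition~\ref{prop:order1_is_order2}.
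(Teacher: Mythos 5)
Your argument is correct and follows exactly the paper's (implicit) route: the corollary is just the $\delta = 0$ specialization of Proposition~\ref{prop:order1_is_order2}, since a rank-$m$ symmetric $A'$ has spectral decomposition $A' = V^{(m)}TV^{(m)T}$ and the choice $\mu = 0$ matches $\delta$. The only tacit point, which the paper also leaves unstated, is that the $m$ nonzero eigenvalues are the \emph{leading} ones---automatic in the paper's positive semidefinite setting but worth a word for a general symmetric $A'$.
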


\section{Perturbation-based kernel approximation framework}  \label{sec:the_extension_framework}

In this section, we derive our perturbation-based kernel approximation framework based on Proposition~\ref{prop:pert_partial_1}. Let $K \in \mathbb{R}^{n \times n}$ be a symmetric positive-definite matrix with $m$ distinct leading eigenpairs that are denoted by $\{ (\lambda_i, u_i ) \}_{i=1}^m$, ordered in descending order. Let $K^s\in \mathbb{R}^{n \times n}$ be a symmetric matrix consisting of any subset of entries of $K$, with the rest of its entries being~$0$, as illustrated in Figure~\ref{fig:K_star}. Our kernel approximation framework approximates the eigenvectors of $K$ using the eigenvectors of any such $K^s$, as follows. 

Let $\{ (\lambda_i^s, u_i^s ) \}_{i=1}^m$ be the leading eigenpairs of $K^s$ ordered in a descending order, and assume that $\lambda_i^s$ are distinct. Let $U^{s(m)} \in \mathbb{R}^{n \times m}$ be the matrix whose columns are the~$m$ eigenvectors of $K^s$ corresponding to the~$m$ largest eigenvalues of $K^s$, and let $\mu \geq 0$ be a parameter. Let $1 \leq i \leq m$. By the first-order approximation in Proposition~\ref{prop:pert_partial_1}, the eigenvector $u_i$ is approximated by
\begin{equation} \label{eqn:pert_extension}
\widetilde{u}_{i} = u^s_{i} + \sum_{k=1, k \neq i}^{m} \frac{ \langle (K - K^s)u^s_i,u^s_k  \rangle}{\lambda^s_i-\lambda^s_k}u^s_k  +  \frac{1}{\lambda^s_i- \mu} \big( I_m - U^{s(m)}U^{s(m)T} \big) (K - K^s)u^s_i  ,
\end{equation}
with an error satisfying
\begin{equation} \label{eqn:ext_error}
\norm{u_i - \widetilde{u}_i}_2 \leq \frac{ \sum_{k=m+1}^n \abs{\lambda^s_k - \mu}  }{\abs{\lambda^s_i - \lambda^s_m}\abs{\lambda^s_i - \mu}} \norm{K - K^s}_2 + O \big( \norm{K - K^s}^2_2 \big).
\end{equation}
Furthermore, by~\eqref{eqn:pert_org_vals}, the eigenvalue $\lambda_i$ is approximated by
\begin{equation} \label{eqn:pert_ext_vals}
\widetilde{\lambda}_i = \lambda_i^s + u_i^{sT}(K - K^s)u^s_i,
\end{equation}
with an error of magnitude $\abs{\lambda_i - \widetilde{\lambda}_i } = O (\norm{K - K^s}^2_2)$.

We refer to~\eqref{eqn:pert_extension} and~\eqref{eqn:pert_ext_vals} as our perturbation approximation for the eigenvectors and eigenvalues, respectively. Finally, we define our perturbation kernel approximation by
\begin{equation} \label{eq:k_pert}
    \widetilde{K}_{\text{pert}} = \sum_{i=1}^m \widetilde{\lambda}_i \widetilde{u}_i^T \widetilde{u}_i.
\end{equation}

Note that this framework is quite general, and can be applied to any symmetric sub-matrix of~$K$. We will propose and discuss several choices for $K^s$ in Section~\ref{sec:applications}.

A kernel approximation framework analogous to~\eqref{eqn:pert_extension} that is based on the second-order approximation in Proposition~\ref{prop:pert_partial_2} can also be obtained in a similar way, but is typically impractical for large~$n$ due to its time and space requirements.

\begin{figure}
  \centering
  \begin{subfigure}[b]{0.3\linewidth}
    \includegraphics[width=\linewidth]{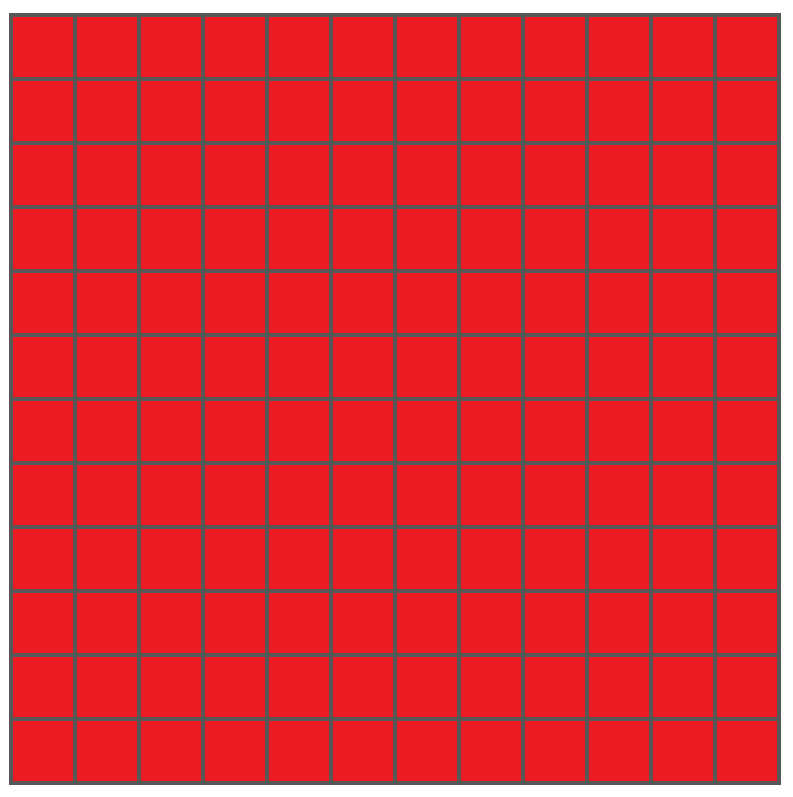}
    \caption{$K$.}
  \end{subfigure}
  \begin{subfigure}[b]{0.3\linewidth}
    \includegraphics[width=\linewidth]{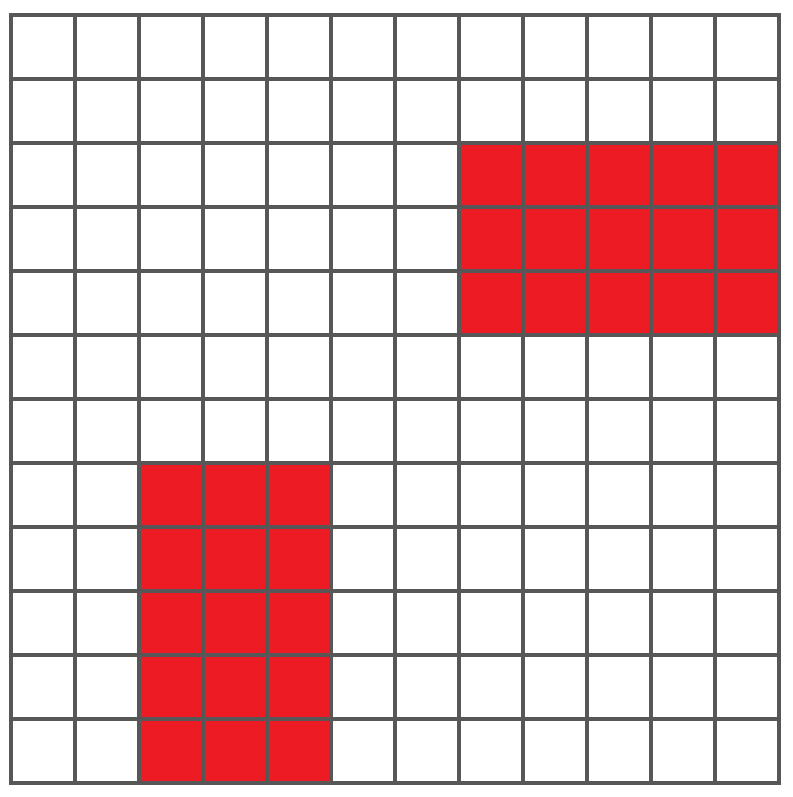}
    \caption{Possible $K^s$.}
  \end{subfigure}
  \begin{subfigure}[b]{0.3\linewidth}
    \includegraphics[width=\linewidth]{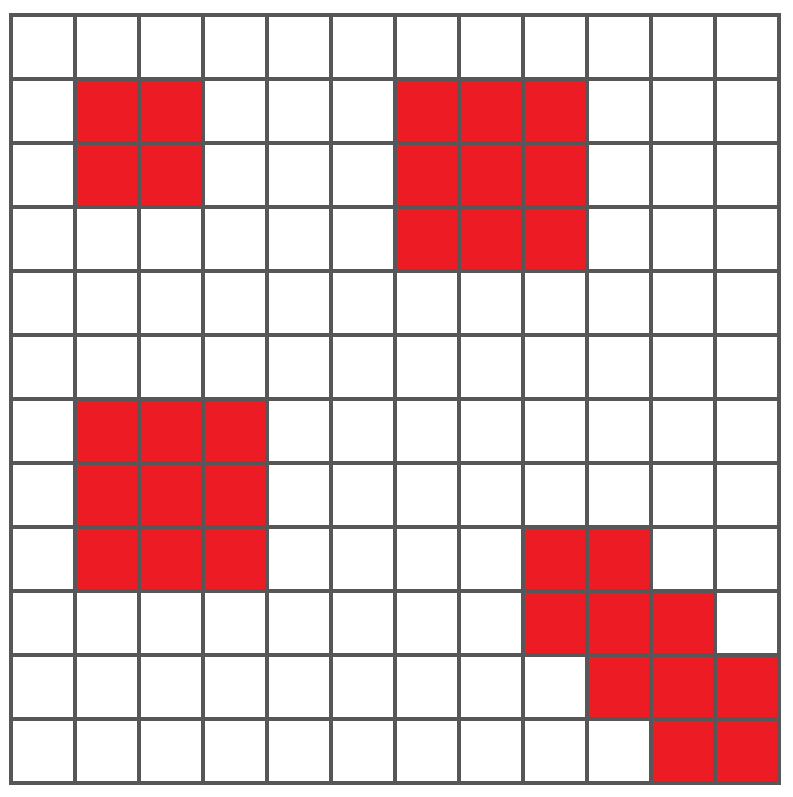}
    \caption{Another possible $K^s$.}
  \end{subfigure}
  \caption{Illustration of the submatrix $K^s$. Blank entries indicate $0$.}
  \label{fig:K_star}
\end{figure}

\section{Equivalence with the Nystr{\"o}m method} \label{sec:nys_is_pert}

In this section, we prove that our perturbation-based kernel approximation framework~\eqref{eqn:pert_extension} is in fact a generalization of the Nystr{\"o}m method described in Section~\ref{sec:preliminaries_nystrom}, by showing that the Nystr{\"o}m method arises from our kernel approximation framework by a specific choice of $K^s$.

Let $K \in \mathbb{R}^{n \times n}$ be a kernel matrix, and let $m < n$. Assume without loss of generality, that we apply the classical Nystrom method, given in~\eqref{eq:nystrom} and~\eqref{eq:nystrom_vals}, to the $m\times m$ upper-left submatrix of $K$, and denote by $\{ (\hat{\lambda}_i, \hat{u}_i) \}_{i=1}^m$ the resulting approximate eigenpairs of $K$. The following proposition states that for a specific choice of the matrix~$K^s$, the perturbation approximation $\widetilde{K}_{\text{pert}}$ of~\eqref{eq:k_pert} is exactly the classical Nystr{\"o}m approximation $\widetilde{K}_{\text{nys}}$ of~\eqref{eqn:nys_k_app}.

\begin{proposition} \label{prop:nys_is_pert}
Using the above notation, let $K^s$ be the $n\times n$ matrix whose top left $m \times m$ submatrix is the top left $m \times m$ submatrix of $K$, and the rest of its entries are 0. Denote by $\{ (\lambda_i^s, u_i^s) \}_{i=1}^{m}$ the eigenpairs of $K^s$. Set $\mu$ in~\eqref{eqn:pert_extension} to be~$0$, and denote by $\{ (\widetilde{\lambda}_i, \widetilde{u}_i) \}_{i=1}^m$ the perturbation approximation of the eigenpairs $\{ (\lambda_i^s, u_i^s) \}_{i=1}^{m}$ of $K^s$ to the eigenpairs of $K$. Denote by $K'$ the top left $m \times m$ submatrix of $K$ and by $\{ (\lambda_i',u_i') \}_{i=1}^m$ its eigenpairs. Denote by $\{ (\hat{\lambda}_i,\hat{u}_i) \}_{i=1}^m$ the Nystr{\"o}m approximation of $\{ (\lambda_i',u_i') \}_{i=1}^m$ (see~\eqref{eq:nystrom} and~\eqref{eq:nystrom_vals}). Then,
\begin{equation}\label{eqn:ev equivalence}
     \hat{u}_i = \sqrt{\frac{m}{n}} \widetilde{u}_i \quad \text{and} \quad \hat{\lambda}_i = \frac{n}{m} \widetilde{\lambda}_i
\end{equation}
for all $1 \leq i \leq m$. In particular,

\begin{equation*}
    \widetilde{K}_{\text{nys}} = \widetilde{K}_{\text{pert}}.
\end{equation*}
\end{proposition}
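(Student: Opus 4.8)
The plan is to exploit the block structure induced by the choice of $K^s$ and reduce every term of the perturbation extension~\eqref{eqn:pert_extension} to a Nystr\"om-type expression. Write $K$ in block form $K = \left(\begin{smallmatrix} K' & B \\ B^T & D \end{smallmatrix}\right)$, where $K'$ is the top-left $m\times m$ block, $B \in \mathbb{R}^{m\times(n-m)}$, and $D \in \mathbb{R}^{(n-m)\times(n-m)}$. Then $K^s = \left(\begin{smallmatrix} K' & 0 \\ 0 & 0 \end{smallmatrix}\right)$ and the perturbation is $K - K^s = \left(\begin{smallmatrix} 0 & B \\ B^T & D \end{smallmatrix}\right)$. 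First I would identify the eigenpairs of $K^s$ with those of $K'$: since $K^s$ is block-diagonal with a zero block, each eigenvector $u_i'$ of $K'$ lifts to the eigenvector $u_i^s = \left(\begin{smallmatrix} u_i' \\ 0 \end{smallmatrix}\right)$ of $K^s$ with the same eigenvalue, so $\lambda_i^s = \lambda_i'$ and $U^{s(m)} = \left(\begin{smallmatrix} U' \\ 0 \end{smallmatrix}\right)$, where $U'$ collects the $u_i'$. Because $K$ is positive semidefinite, so is its principal submatrix $K'$, and the remaining $n-m$ eigenvalues of $K^s$ are zero; hence the $\lambda_i'$ are indeed the leading $m$ eigenvalues of $K^s$ and the lifted $u_i^s$ inherit orthonormality, so they form a legitimate leading eigensystem to feed into~\eqref{eqn:pert_extension}.

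The key observation is that $(K - K^s)u_i^s = \left(\begin{smallmatrix} 0 \\ B^T u_i' \end{smallmatrix}\right)$ is supported on the bottom $n-m$ coordinates, whereas every $u_k^s$ and every column of $U^{s(m)}$ is supported on the top $m$ coordinates. I would then read off three consequences of this support disjointness. (i) The eigenvalue correction in~\eqref{eqn:pert_ext_vals} vanishes, $u_i^{sT}(K-K^s)u_i^s = 0$, so $\widetilde\lambda_i = \lambda_i^s = \lambda_i'$; combined with the Nystr\"om eigenvalue formula~\eqref{eq:nystrom_vals} this already gives $\hat\lambda_i = \tfrac{n}{m}\lambda_i' = \tfrac{n}{m}\widetilde\lambda_i$. (ii) Each inner product $((K-K^s)u_i^s, u_k^s)$ appearing in the first-order sum of~\eqref{eqn:pert_extension} is zero, so the entire sum disappears. (iii) The spectral projection acts trivially, $U^{s(m)T}(K-K^s)u_i^s = 0$, hence $(I - U^{s(m)}U^{s(m)T})(K-K^s)u_i^s = (K-K^s)u_i^s$.

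With $\mu = 0$, substituting (ii) and (iii) into~\eqref{eqn:pert_extension} leaves
\[
\widetilde{u}_i \;=\; u_i^s + \frac{1}{\lambda_i'}(K-K^s)u_i^s \;=\; \begin{pmatrix} u_i' \\ \tfrac{1}{\lambda_i'}B^T u_i' \end{pmatrix}.
\]
Finally I would recognize the right-hand side as the Nystr\"om extension. Since the matrix $C$ of the first $m$ columns of $K$ is $C = \left(\begin{smallmatrix} K' \\ B^T \end{smallmatrix}\right)$, we have $Cu_i' = \left(\begin{smallmatrix}\lambda_i' u_i' \\ B^T u_i'\end{smallmatrix}\right)$, so $\tfrac{1}{\lambda_i'}Cu_i' = \widetilde u_i$. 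Comparing with the Nystr\"om formula~\eqref{eq:nystrom} yields $\hat u_i = \sqrt{m/n}\,\tfrac{1}{\lambda_i'}Cu_i' = \sqrt{m/n}\,\widetilde u_i$, which together with the eigenvalue identity completes the proof.

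This argument is essentially bookkeeping, so I do not anticipate a deep obstacle; the single point requiring care is the support/orthogonality argument, which must simultaneously annihilate the off-diagonal first-order sum, kill the eigenvalue correction, and turn the complementary spectral projection into the identity on $(K-K^s)u_i^s$. I would state this disjointness cleanly once and invoke it for all three reductions, and I would flag the mild genericity caveat that the $\lambda_i'$ be the strict leading eigenvalues of $K^s$ (an issue only if $K'$ is rank-deficient and its zero eigenvalues tie with those of the bottom block).
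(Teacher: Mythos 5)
Your proposal is correct and follows essentially the same route as the paper's proof: both exploit the support disjointness between $u_k^s$ (top $m$ coordinates) and $(K-K^s)u_i^s$ (bottom $n-m$ coordinates) to kill the first-order sum, the projection term, and the eigenvalue correction, reduce the extension to $u_i^s + \tfrac{1}{\lambda_i^s}(K-K^s)u_i^s$, and then match it entrywise with $\sqrt{m/n}\,\tfrac{1}{\lambda_i'}Cu_i'$. Your block-matrix bookkeeping and the explicit remark that positive semidefiniteness guarantees the $\lambda_i'$ are indeed the leading eigenvalues of $K^s$ are minor presentational improvements over the paper's coordinate-wise argument, but not a different proof.
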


The proof of Proposition~\ref{prop:nys_is_pert} is given in Appendix~\ref{app4}.

Formulating the Nystr{\"o}m method as a perturbation-based approximation using Proposition~\ref{prop:nys_is_pert} enables us to provide an error bound for the Nystr{\"o}m method based on Propositions~\ref{prop:pert_partial_1} and~\ref{prop:pert_partial_2}. Contrary to previous works that only provide error bounds for the approximated kernel resulting from the Nystr{\"o}m method~\eqref{eqn:nys_k_app}, our error bound is for the individual approximated eigenvectors, as stated in the following proposition.

\begin{proposition} [Error bound for the Nystr{\"o}m method]  \label{prop:pert_error_for_nys}
Using the above notation, the error induced by the Nystr{\"o}m method satisfies
\begin{equation*} 
\norm{u_i - \hat{u}_i}_2 =  O \big( \norm{K - K^s}^2_2 \big), \quad 1 \leq i \leq m .
\end{equation*}
\end{proposition}

The proof follows directly from the equivalence stated in Proposition~\ref{prop:nys_is_pert} by noting that in the Nystr{\"o}m method setting, the requirements of Corollary~\ref{col:first_is_second} hold.

\section{New kernel approximation schemes based on the perturbation framework} \label{sec:applications}

The perturbation-based kernel approximation framework derived in Section~\ref{sec:the_extension_framework} is very flexible, and allows for various approximations that depend on the choice of the matrix $K^s$. There are two main considerations in choosing $K^s$. First, since calculating the eigendecomposition of $K^s$ is the most expensive part of the perturbation-based kernel approximation, we wish to choose a matrix $K^s$ whose eigendecomposition is ``easy" to compute. This will make our framework computationally attractive. Second, we would like to take advantage of the flexibility of our framework and choose a matrix $K^s$ that ``captures" as much of a given kernel $K$ as possible (that is, to minimize the $\norm{K - K_s}_2$ term in~\eqref{eqn:pert_extension}). This will allow for better approximation results compared to classical Nystr{\"o}m-type methods.

In this section, we propose several approximation schemes corresponding to different choices of $K^s$. Furthermore, we prove that several of the Nyst{\"o}om method variants described in Section~\ref{sec:prem} also arise from our general approximation framework for suitable choices of $K^s$. The list below is not by all means comprehensive, and users might come up with different approximation schemes that are more suitable to their problems' settings.

\subsection{\texorpdfstring{$l$}{}-block kernel approximation} \label{sec:l_ext}

We define the $l$-block kernel approximation by choosing the matrix $K^s$ to be the top left $l \times l$ submatrix of $K$ padded with zeros to size $n \times n$ (see Figure~\ref{fig:app_illus}), with $l \geq m$ being a parameter. The difference of this approach from the Nystr{\"o}m method is that while we still calculate $m$ eigenpairs, we do so on a larger block, which ``captures'' more of $K$. This comes at the price of a greater computational cost.

\begin{proposition}
Using the notation of this section, the eigenpairs calculated by the randomized SVD Nystr{\"o}m method~\cite{li2014large} and the $l$-block kernel approximation method are equal.
\end{proposition}

The proof easily follows from the definitions and Proposition~\ref{prop:nys_is_pert}.

\subsection{\texorpdfstring{$\mu$}{}-shifted kernel approximation} \label{sec:shifted_ext}

We define the $\mu$-shifted kernel approximation by choosing the matrix $K^s$ to be the top left $m \times m$ submatrix of~$K$ padded with zeros to size $n \times n$, similarly to the Nystr{\"o}m method (see Figure~\ref{fig:app_illus}). The difference from the Nystr{\"o}m method lies in the parameter $\mu$ of~\eqref{eqn:pert_extension}. In Proposition~\ref{prop:nys_is_pert}, we used the value of the parameter $\mu$ to be $\mu=0$. This is a reasonable choice when the kernel matrix~$K$ is low-rank, or when its spectrum decays fast. When that is not the case, it might be beneficial to choose a parameter $\mu$ that approximates the unknown eigenvalues of~$K$. A reasonable choice for~$\mu$ in such a case is $\mu_{mean}$ of~\eqref{eqn:mu_mean}.

We now prove that given a parameter $\mu \geq 0$, the spectral shifted Nystr{\"o}m method~\cite{wang2014improving} with parameter $\mu$ coincides with the perturbation approximation method with the same $\mu$, as detailed in the following proposition.

\begin{proposition} \label{prop:ss_nys_is_pert}
Using the notation of this section, the eigenpairs calculated by the spectrum shifted Nystr{\"o}m method~\cite{wang2014improving} and the $\mu$-shifted approximation method are equal.
\end{proposition}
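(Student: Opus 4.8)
The plan is to reduce the claim to Proposition~\ref{prop:nys_is_pert} by applying that proposition not to $K$ but to the shifted matrix $K_{\text{shift}} = K - \mu I$ of~\eqref{eq:shift}. By definition, the spectrum shifted Nyström method is the classical Nyström extension run on $K_{\text{shift}}$, followed by shifting the resulting eigenvalues back by $\mu$. Applying Proposition~\ref{prop:nys_is_pert} verbatim to $K_{\text{shift}}$ already identifies that classical Nyström extension (up to the factors $\sqrt{m/n}$ and $n/m$) with the perturbation extension, using parameter $0$, of the eigenpairs of $(K_{\text{shift}})^s$ to $K_{\text{shift}}$, where $(K_{\text{shift}})^s$ is the top-left $m\times m$ block of $K_{\text{shift}}$ padded with zeros. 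It therefore suffices to show that this $\mu=0$ perturbation extension of $(K_{\text{shift}})^s$ to $K_{\text{shift}}$ coincides with the $\mu$-shifted extension of $K^s$ to $K$.

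To compare the two extensions, write $K$ in block form as $K = \bigl(\begin{smallmatrix} K' & B \\ B^T & D \end{smallmatrix}\bigr)$, so that $K^s = \bigl(\begin{smallmatrix} K' & 0 \\ 0 & 0 \end{smallmatrix}\bigr)$ and $(K_{\text{shift}})^s = \bigl(\begin{smallmatrix} K'-\mu I_m & 0 \\ 0 & 0 \end{smallmatrix}\bigr)$. Subtracting $\mu I_m$ from the nonzero block leaves its eigenvectors unchanged and lowers its eigenvalues by $\mu$, so $K^s$ and $(K_{\text{shift}})^s$ share the same leading eigenvectors $u_i^s = \bigl(\begin{smallmatrix} u_i' \\ 0 \end{smallmatrix}\bigr)$, while $\lambda_i\bigl((K_{\text{shift}})^s\bigr) = \lambda_i^s - \mu$ (here one uses that $\mu$ is small enough to preserve the ordering of the leading $m$ eigenpairs, which is the regime in which the shifted method is used). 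In particular the projection $I - U^{s(m)}U^{s(m)T}$ is the same for both extensions. The crucial observation is that the two perturbation matrices $K - K^s = \bigl(\begin{smallmatrix} 0 & B \\ B^T & D \end{smallmatrix}\bigr)$ and $K_{\text{shift}} - (K_{\text{shift}})^s = \bigl(\begin{smallmatrix} 0 & B \\ B^T & D-\mu I \end{smallmatrix}\bigr)$ differ only in the lower-right block; since every $u_i^s$ vanishes there, $(K_{\text{shift}} - (K_{\text{shift}})^s)u_i^s = (K - K^s)u_i^s$ for all $i$. Hence every perturbation-action quantity entering~\eqref{eqn:pert_extension} — the inner products $((K-K^s)u_i^s, u_k^s)$ and the residual $(I - U^{s(m)}U^{s(m)T})(K-K^s)u_i^s$ — is identical in the two setups.

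It then remains to match the scalar coefficients. Substituting $\lambda_i\bigl((K_{\text{shift}})^s\bigr) = \lambda_i^s - \mu$ into the $\mu=0$ formula for $(K_{\text{shift}})^s \to K_{\text{shift}}$, the gap denominators become $(\lambda_i^s-\mu)-(\lambda_k^s-\mu) = \lambda_i^s - \lambda_k^s$, exactly the denominators of the $\mu$-shifted formula, while the last-term denominator $(\lambda_i^s-\mu)-0 = \lambda_i^s - \mu$ reproduces precisely the $\lambda_i^s - \mu$ appearing in~\eqref{eqn:pert_extension} for $K^s \to K$ with parameter $\mu$. The two eigenvector formulas thus agree term by term, and combining with Proposition~\ref{prop:nys_is_pert} for $K_{\text{shift}}$ gives the eigenvector equivalence up to the usual $\sqrt{m/n}$. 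For the eigenvalues, the first-order correction $u_i^{sT}(K-K^s)u_i^s$ in~\eqref{eqn:pert_ext_vals} vanishes because $(K-K^s)u_i^s$ is supported on the lower block, orthogonal to $u_i^s$; the $\mu$-shifted extension therefore returns $\lambda_i^s$, whereas the spectrum shifted Nyström returns $\tfrac{n}{m}(\lambda_i^s - \mu) + \mu$ by~\eqref{eq:nystrom_vals} and the shift-back, i.e.\ the same value pushed through the $n/m$ scaling and shift-back that define the spectrum shifted method.

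The main obstacle is making precise why the spectral shift is invisible to the perturbation extension's eigenvectors. The shift $-\mu I$ decomposes into its restriction to the known leading subspace and to its orthogonal complement: the former is absorbed into the eigenvalue shift $\lambda_i^s \mapsto \lambda_i^s - \mu$ of $(K_{\text{shift}})^s$, which leaves the spectral gaps untouched and converts the $\mu=0$ denominator into the desired $\lambda_i^s - \mu$, while the latter acts only on the zero-padded tails of the $u_i^s$ and is therefore annihilated. Establishing this decomposition cleanly, together with the minor bookkeeping that the shift preserves the ordering of the leading eigenpairs, is the heart of the argument.
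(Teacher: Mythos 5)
Your proposal is correct and follows essentially the same route as the paper's own proof: reduce to Proposition~\ref{prop:nys_is_pert} applied to $K_{\text{shift}}$, observe that the eigenvectors of $K^s$ and $K^s_{\text{shift}}$ coincide while the eigenvalues shift by $\mu$, and use the key identity $(K_{\text{shift}} - K_{\text{shift}}^s)u_i^s = (K - K^s)u_i^s$ (valid because $u_i^s$ vanishes on the last $n-m$ coordinates where the two perturbations differ) to match the formulas term by term. Your treatment of the gap denominators and the $n/m$ scaling is, if anything, slightly more explicit than the paper's.
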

The proof of Proposition~\ref{prop:ss_nys_is_pert} is given in Appendix~\ref{app5}.

\subsection{Block-diagonal kernel approximation} \label{sec:block_ext}

We define the block-diagonal kernel approximation by choosing the matrix $K^s$ to be a block diagonal matrix (see Figure~\ref{fig:app_illus}). The block sizes can be arbitrary, but for simplicity of notation, we choose $k$ blocks of an identical size $l \geq m$.  For each block, we pad the block with zeros to obtain an $n \times n$ matrix, and calculate its $m$ leading eigenpairs. We then approximate the eigenpairs of $K$ using~\eqref{eqn:pert_extension}. Denote by $\{ (\widetilde{\lambda}_i^{(j)}, \widetilde{u}_i^{(j)}) \}_{i=1}^m$ the extended eigenpairs of block~$j$, and by $\widetilde{K}_j \in \mathbb{R}^{n \times n}$ the resulting kernel approximation.
To combine the $k$ approximations $\{ \widetilde{K}_j \}_{j=1}^k$ to an approximation of $K$, we set
\begin{equation*}
    \widetilde{K} =  \frac{1}{k} \sum_{i=1}^{k}  \widetilde{K}_i .
\end{equation*}

We note that the kernel matrix approximation obtained by this method generally won't be low-rank.

\begin{proposition}
Using the notation of this section, the eigenpairs calculated by the ensemble Nystr{\"o}m method~\cite{kumar2009ensemble} and the block diagonal kernel approximation method are equal.
\end{proposition}

The proof easily follows from the definitions and Proposition~\ref{prop:nys_is_pert}.

\subsection{\texorpdfstring{$p$}{}-band kernel approximation} \label{sec:bp_ext}

We define the $p$-band kernel approximation by choosing the matrix $K^s$ to be a band matrix of width $p$ (see Figure~\ref{fig:app_illus}). This approximation may provide superior results when the kernel $K$ has most of its energy concentrated along the diagonal. Such a kernel may arise naturally for sequential data, where adjacent entries are more ``similar" to each other. In such a case, the $p$-band kernel approximation will capture most of $K$.  Existing Nystr{\"o}m-type approximations, on the other hand, are not able to do so since they are limited to block matrices.
We demonstrate the advantage of this kernel approximation method in Section~\ref{sec:num_p_band}.

\begin{figure}[ht]
  \centering
  \begin{subfigure}[t]{0.3\linewidth}
    \includegraphics[width=\linewidth]{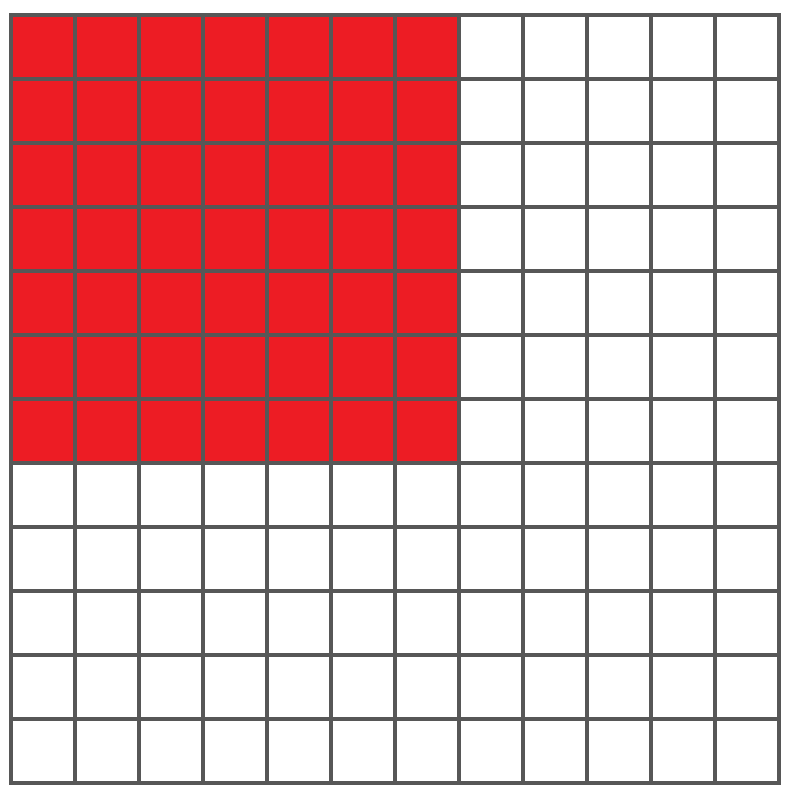}
    \caption{$K^s$ for $l$-block kernel approximation. \\ \hfill}
  \end{subfigure} \label{fig:0}
  \begin{subfigure}[t]{0.3\linewidth}
    \includegraphics[width=\linewidth]{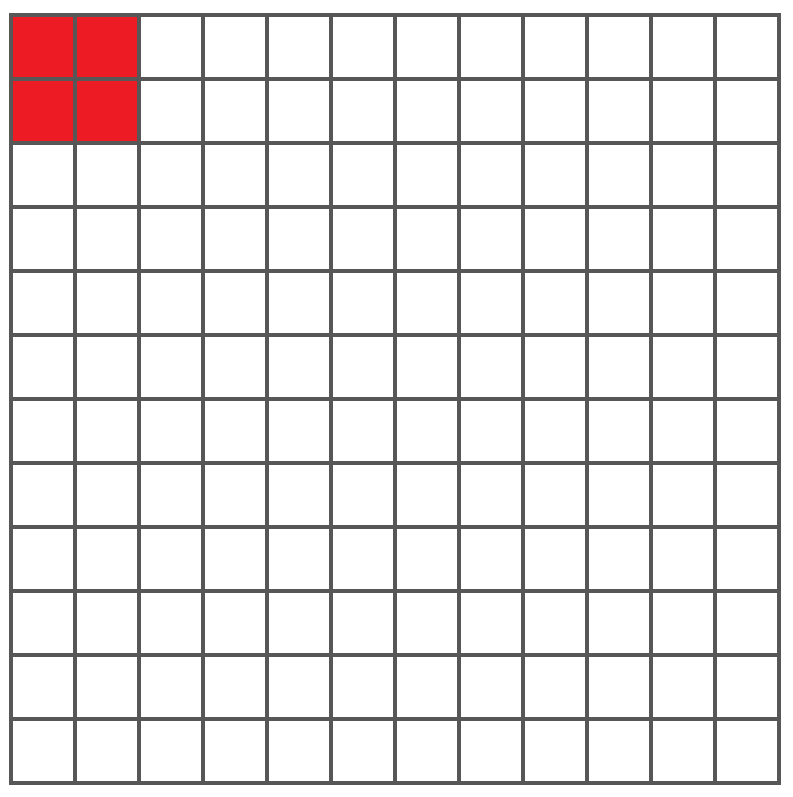}
    \caption{$K^s$ for $\mu$-shifted kernel approximation. \\ \hfill}
  \end{subfigure} \label{fig:1}
  \\
  \begin{subfigure}[t]{0.3\linewidth}
    \includegraphics[width=\linewidth]{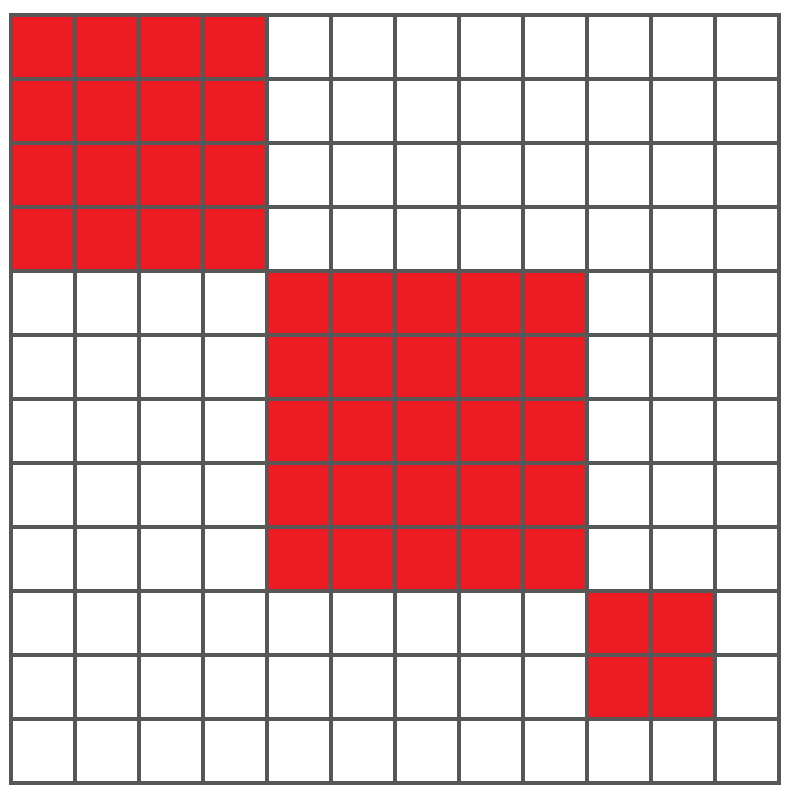}
    \caption{$K^s$ for block diagonal kernel approximation.}
  \end{subfigure}
  \begin{subfigure}[t]{0.3\linewidth}
    \includegraphics[width=\linewidth]{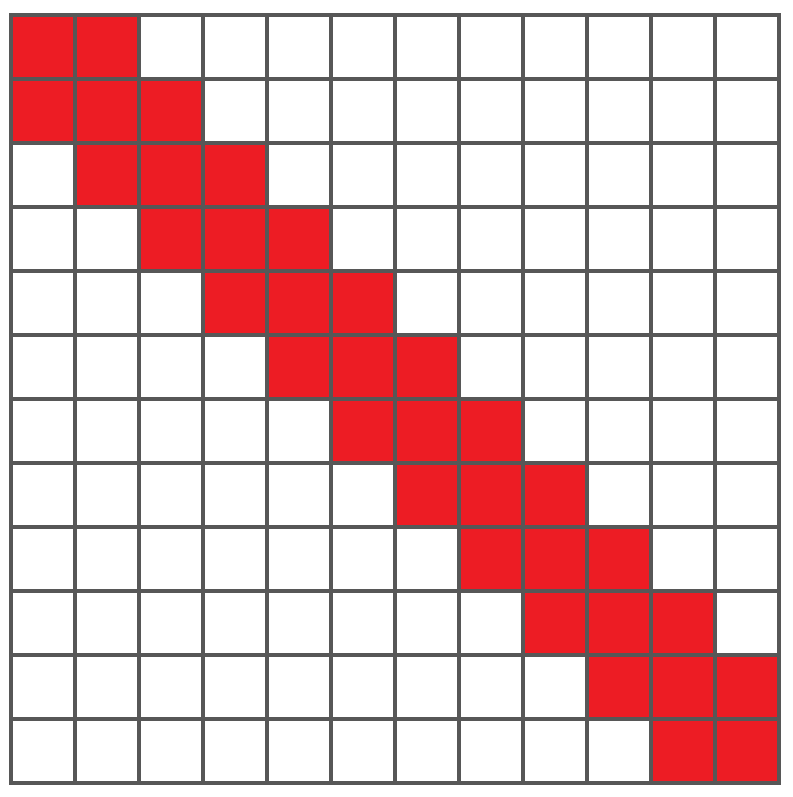}
    \caption{$K^s$ for $p$-band kernel approximation. \\ \hfill}
  \end{subfigure}
  \caption{Illustration of the submatrix $K^s$ for each of the discussed kernel approximations. Blank entries indicate $0$.}
  \label{fig:app_illus}
\end{figure}

\subsection{Sparse kernel approximation} \label{sec:sparse_ext}

We define the sparse kernel approximation by choosing the matrix $K^s$ to be some sparse submatrix of $K$, as illustrated in Figure~\ref{fig:app_sparse}. More concretely, in the sparse approximation framework, we denote by $\text{nnz}(K)$ the number of non-zero entries of $K$, and define~$K^s$ by choosing $q \cdot \text{nnz}(K)$ entries of $K$, for some $0 < q \leq 1$. While this approximation is valid for any (symmetric) subset of elements of $K$, motivated by the $\norm{E}_2$ term in the error bounds~\eqref{eqn:error_trunc_1} and~\eqref{eqn:error_trunc_2}, we suggest choosing the $q \cdot \text{nnz}(K)$ largest entries of $K$ in their absolute value. We demonstrate the advantage of this kernel approximation method in Section~\ref{sec:num_sparse}.

\begin{figure}
  \centering
  \begin{subfigure}[b]{0.3\linewidth}
    \includegraphics[width=\linewidth]{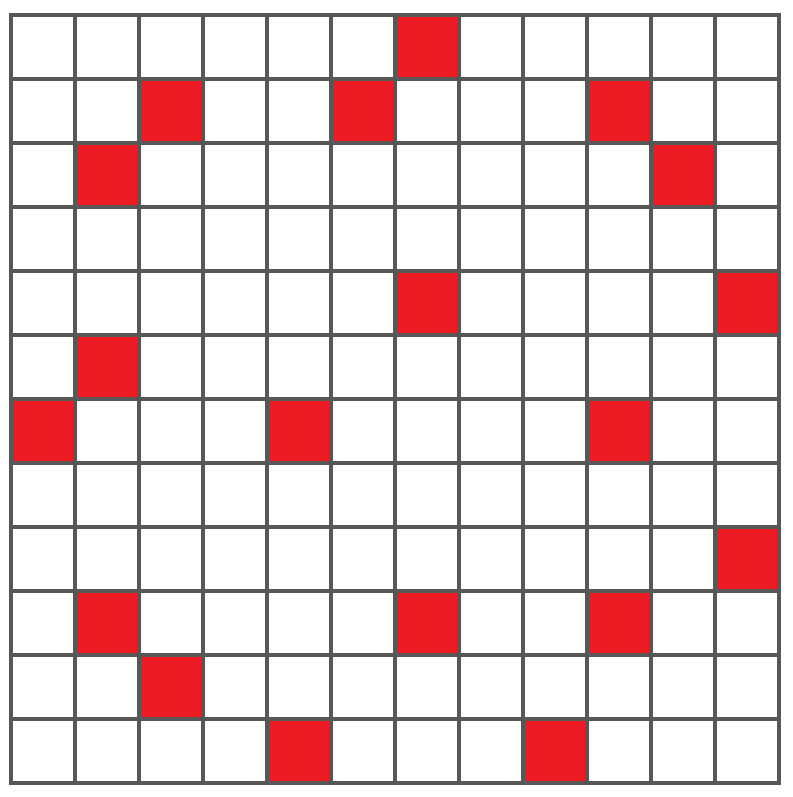}
    \caption{Sparse $K$.}
  \end{subfigure}
  \begin{subfigure}[b]{0.3\linewidth}
    \includegraphics[width=\linewidth]{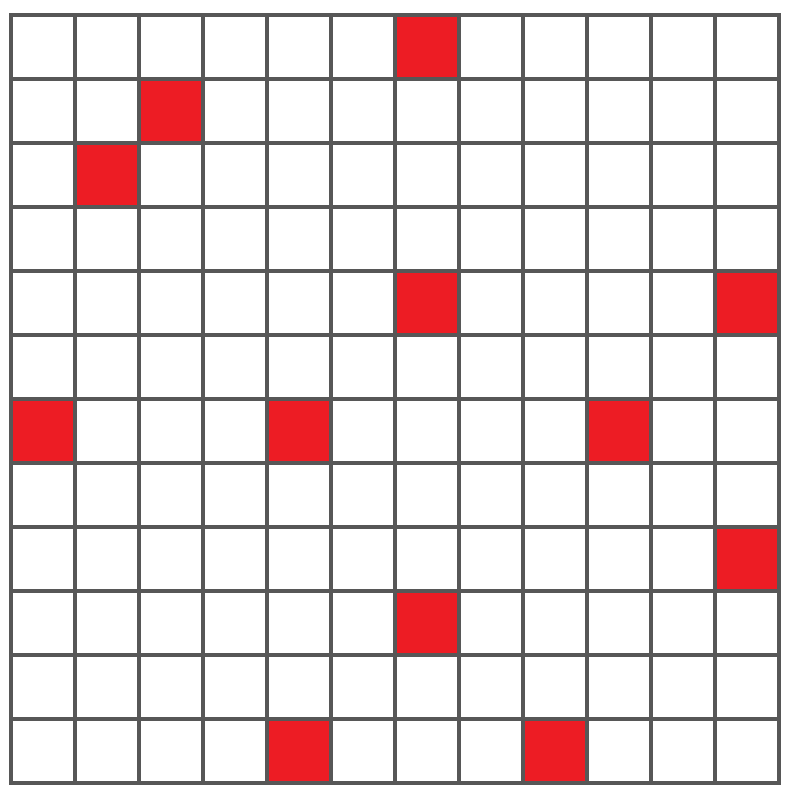}
    \caption{Corresponding $K^s$.}
  \end{subfigure}
  \caption{Illustration of a sparse kernel approximation. Blank entries indicate $0$.}
  \label{fig:app_sparse}
\end{figure}

\section{Numerical examples} \label{sec:numerical}

In this section, we demonstrate numerically the results obtained in the previous sections. We start by demonstrating numerically the error bounds derived in Section~\ref{sec:trunc_pert}. Then, we demonstrate the advantages of the kernel approximation methods proposed in Section~\ref{sec:applications} using both real and synthetic datasets.

The MATLAB code to reproduce the graphs in this section is found in \\ \href{https://github.com/roymitz/perturbation_out_of_sample_extension}{\texttt{github.com/roymitz/perturbation\_kernel\_approximation}}.

\subsection{Perturbation error bounds}

In this section, we demonstrate numerically the behavior of the error bounds~\eqref{eqn:error_trunc_1} and~\eqref{eqn:error_trunc_2} in Propositions~\ref{prop:pert_partial_1} and~\ref{prop:pert_partial_2}, respectively. In our first example, we demonstrate the predicted linear dependence of the errors
$\norm{w_i - \widetilde{w}_i^{(1)}}_2$ and $\norm{w_i - \widetilde{w}_i^{(2)}}_2$ on $\norm{E}_2$. We also show the quadratic dependence of these errors on $\norm{E}_2$ for a proper choice of $\mu$. To that end, we generate a random symmetric matrix $A' \in \mathbb{R}^{1000 \times 1000}$ whose~10 leading eigenvalues are between~1 and~2, and the rest are exactly~$0.5$. We then generate a random symmetric matrix~$E$, and normalize it to have a unit spectral norm. Then, for various values of $c$, we approximate the~10 leading eigenpairs of $A_c = A' + cE$ by the first and second-order approximations~\eqref{eq:pert_expansion} and~\eqref{eq:pert_expansion2} using $\mu = 0$ and $\mu_\text{mean}$. Denote by $v_c$ the leading eigenvector of $A_c$, and by $u^1_c$ and $u^2_c$ its approximations by~\eqref{eq:pert_expansion} and~\eqref{eq:pert_expansion2} using $\mu =0 $, respectively. Denote by $w^1_c$ and $w^2_c$ the respective approximations using $\mu_\text{mean}$. For each~$c$, we measure the errors $\norm{v_c - u^1_c}_2$, $\norm{v_c - u^2_c}_2$, $\norm{v_c - w^1_c}_2$ and $\norm{v_c - w^2_c}_2$. In~\Cref{fig:pert_1}, we plot all the $\text{log}_{10}$-errors versus $\log_{10} \norm{cE}_2$. As predicted by theory, there is a linear dependence between the error in the eigenvector approximation and the norm of the perturbation matrix when using $\mu = 0$. Furthermore, we see that when using $\mu_\text{mean}$, both formulas coincide and give rise to the same second-order approximation, as predicted by Proposition~\ref{prop:order1_is_order2}.


In our second example, we demonstrate the linear and quadratic dependence of the error on $\sum_{j=m+1}^n \abs{\lambda_j -\mu}$, that is, on the unknown eigenvalues of the perturbed matrix. For various values of $c \in \mathbb{R}$, we generate matrices $A'_c  \in \mathbb{R}^{1000 \times 1000}$ as follows. Their 10 leading  eigenvalues are between 1 and 2, and are the same for all values of $c$. The rest of their eigenvalues are exactly $c$. Then, we generate a random symmetric matrix $E \in \mathbb{R}^{1000 \times 1000}$ and normalize it to have a norm of $10^{-6}$. We choose $\norm{E}_2$ to be relatively small, so that its contribution to the error will not mask the effect of $\sum_{j=m+1}^n \abs{\lambda_j -\mu}$. We approximate the 10 leading eigenpairs of $A_c = A'_c + E$ by the first and second-order approximations~\eqref{eq:pert_expansion} and~\eqref{eq:pert_expansion2} using $\mu = 0$, and measure the error in the same way as in the previous example. In~\Cref{fig:pert_2}, we plot  $\log{\norm{v_c - u^1_c}}_2$ and $\log{\norm{v_c - u^2_c}}_2$ versus $\log{\abs{\lambda_j -\mu}} = \log{\abs{\lambda_j}} = \log c$. As predicted by Propositions~\ref{prop:pert_partial_1} and~\ref{prop:pert_partial_2}, there is a linear dependence between the error in the eigenvector approximation and $c$ for the first-order approximation, and a quadratic dependence for the second-order formula. We note that using $\mu_{\text{mean}}$ in this example will cancel out the $\sum_{j=m+1}^n \abs{\lambda_j -\mu}$ term, since in such a case $\mu_{\text{mean}} = c$.

\begin{figure}
  \centering
  \begin{subfigure}[b]{0.4\linewidth} 
    \includegraphics[width=\linewidth]{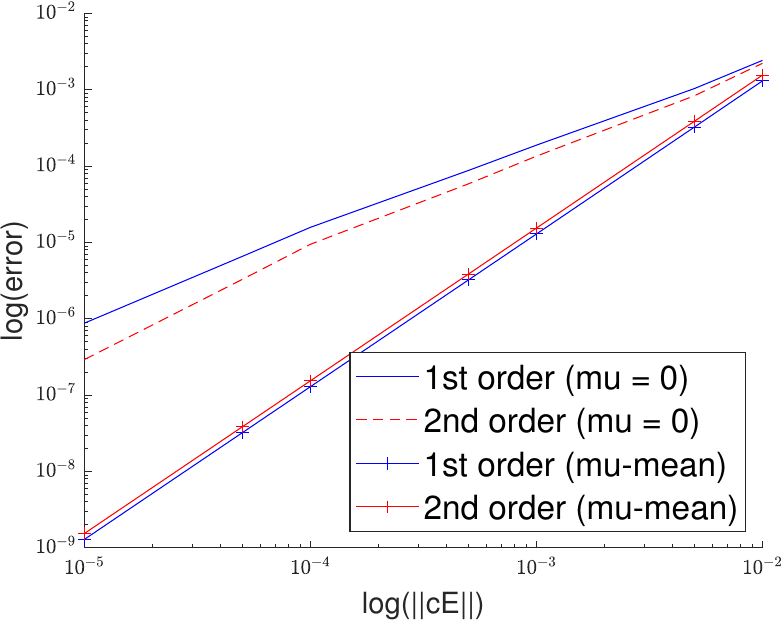}
    \caption{The dependence of the first and second-order formulas on the perturbation matrix norm $\norm{E}_2$. \\ }
    \label{fig:pert_1}
  \end{subfigure} 
  \hspace{1cm}
  \begin{subfigure}[b]{0.4\linewidth}
    \includegraphics[width=\linewidth]{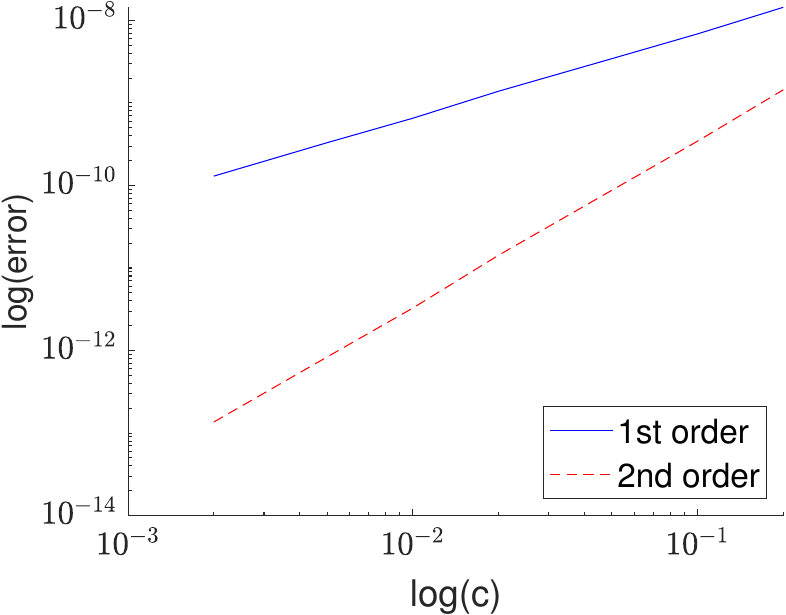}
    \caption{The dependence of the first and second-order formulas on the unknown eigenvalues of the perturbed matrix} $\sum_{j=m+1}^n \abs{\lambda_j -\mu}$.
     \label{fig:pert_2}
  \end{subfigure}
\caption{Numerical demonstration of the error terms in the approximations~\eqref{eq:pert_expansion} and~\eqref{eq:pert_expansion2}. (\subref{fig:pert_1})~$\log{(\text{error})}$ vs. $\log{\norm{cE}_2}$. The slope of both curves corresponding to $\mu = 0$ is~$\approx 1$, demonstrating the linear dependence of the error terms~\eqref{eqn:error_trunc_1} and~\eqref{eqn:error_trunc_2} on $\norm{E}_2$ for both the first and second-order approximations. On the other hand, both curves corresponding to $\mu_{\text{mean}}$ coincide with slope $\approx 2$, demonstrating Proposition~\ref{prop:order1_is_order2}. (\subref{fig:pert_2})~$\log{(\text{error})}$ vs. $\log{\abs{\lambda_j - \mu}}$. The slope of the line corresponding to the first-order approximation is $\approx 1$, whereas the slope of the line corresponding to the second-order approximation is $\approx 2$, demonstrating the linear and quadratic dependence of the first and second-order error terms on $\sum_{j=m+1}^n \abs{\lambda_j -\mu}$, respectively.}
  \label{fig:numerical_pert}
\end{figure}

\subsection{Perturbation-based approximation for synthetic and real data} \label{sec:numeric_2}

In this section, we compare the various kernel approximation methods proposed in Section~\ref{sec:applications} for both synthetic and real data. We demonstrate that the various approximation schemes perform differently, depending on the structure of the kernel matrix. 

As our metric for comparing the performance of the various methods we use the kernel reconstruction error, i.e., if $K_m$ is the best rank-$m$ approximation of the kernel matrix of the entire data (obtained by SVD), and $\widetilde{K}$ is its approximation, we define
\begin{equation*}
    \text{err} = \frac{\norm{K_m - \widetilde{K}}_2}{\norm{K_m}_2}.
\end{equation*}
We note that other metrics we tested performed qualitatively similarly. The metrics we tested were the principal angle~\cite{knyazev2012principal} between the subspace spanned by the kernel's top eigenvectors and the subspace spanned by their approximations, and the subspace projection error $\frac{\norm{UU^T - VV^T}_2}{\norm{UU^T}_2}$, where $U$ and $V$ are the ground truth subspace and its approximation, respectively. The results for all tested metrics are given in Appendix~\ref{app6}.

In all the numerical examples to follow, we use $\mu = 0$ for all approximation schemes. We note that we typically witness a marginal difference between the performance of $\mu = 0 $ and $\mu=\mu_{\text{mean}}$ for real-world data, as such data are usually close to being low-rank. Thus, we do not include the $\mu$-shifted kernel approximation in the experiments of this section. We also set $n = 1000$ in all experiments. We choose $m$ to be the number of components that account for 90\% of the energy of $K$, with a maximum value of 5. For the block-diagonal kernel approximation, we always use two blocks of the same size.

Each experiment is performed as follows. For each kernel type, we generate several kernels of that type, each with a different kernel parameter (to be explained later for each example). Then, we approximate each of the kernels using each of the approximation methods, where in any case the matrix $K^s$ used in the approximation consists of 20\% of the entries of the approximated kernel. For each such kernel approximation, we measure the approximation error versus the Hoyer score~\cite{hurley2009comparing} of the kernel matrix $K$ written as a long vector. The Hoyer score of a vector $v \in \mathbb{R}^n$ is defined by
\begin{equation*}
    \text{Hoyer}(v) = \frac{\sqrt{n} - \frac{\norm{v}_1}{\norm{v}_2}}{\sqrt{n} - 1} .
\end{equation*}
The Hoyer score is a number between~0 and~1, with a higher score corresponding to a sparser vector. We repeat this procedure 20 times for each approximation scheme and kernel parameter, each time with a different random subset of the data. Finally, we plot the mean approximation error for each kernel approximation scheme versus the Hoyer score of the approximated kernel $K$. We also add to the plot the standard deviation of the error of each scheme, presented as a shaded plot around the mean.

\subsubsection{Kernels concentrated along the diagonal} \label{sec:num_p_band}

In this example, we demonstrate the performance of each of the kernel approximation schemes of Section~\ref{sec:applications} using kernel matrices whose energy is concentrated along their diagonal. We build these kernels as follows. We generate a matrix $K$ whose $(i,j)$ and $(j,i)$ entries are $\abs{i - j}^{-\alpha} + X$, where $X$ are i.i.d samples from a normal distribution with mean zero and standard deviation 0.0001. Larger values of $\alpha$ correspond to a more concentrated matrix, whereas smaller values of $\alpha$ correspond to a more spread matrix. 

We execute the experiment described at the beginning of Section~\ref{sec:numeric_2}. The results are presented in Figure~\ref{fig:app_p_band}. We can see that for higher Hoyer scores, which correspond to sparser matrices that are concentrated along the diagonal, the error graphs of the $p$-band and sparse kernel approximation schemes are lower than the error graphs of the other kernel approximation schemes. We conclude that when the kernel is concentrated along its diagonal, the $p$-band kernel approximation and the sparse kernel approximation have superior performance. The performance of the sparse kernel approximation is comparable to the performance of the $p$-band kernel approximation.

\begin{figure}[H]
  \centering
  \captionsetup{justification=centering}
  \begin{subfigure}[b]{0.5\linewidth}
    \includegraphics[width=\linewidth]{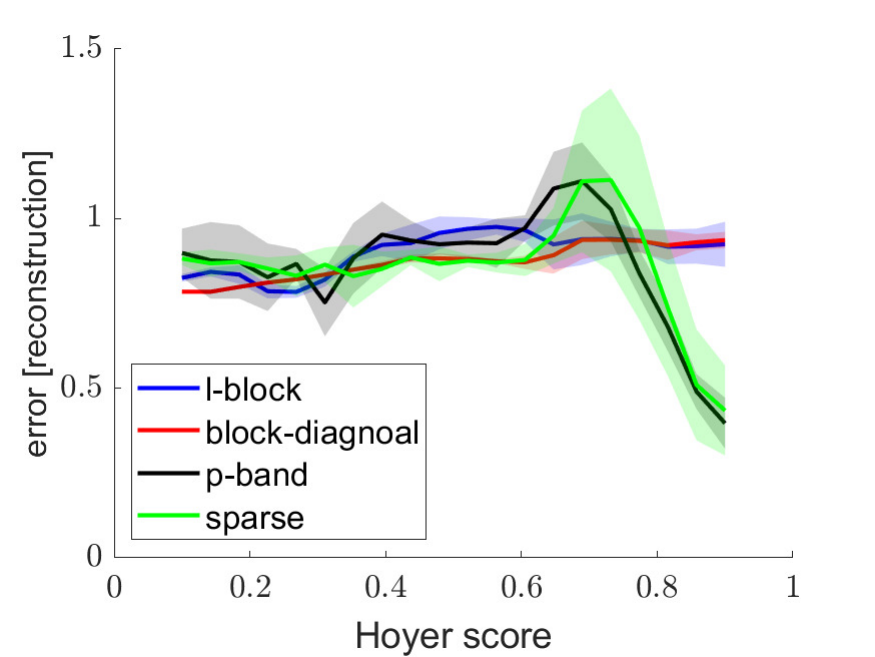}
  \end{subfigure}

  \caption{The error of the various kernel approximation schemes for a kernel concentrated along its diagonal. When the kernel is concentrated along the diagonal, and hence sparser, the $p$-band kernel approximation outperforms the other schemes. The performance of the sparse kernel approximation is comparable to the performance of the $p$-band kernel approximation.}
  \label{fig:app_p_band}
\end{figure}

\subsubsection{Sparse kernels} \label{sec:num_sparse}

In this example, we wish to demonstrate the performance of each kernel approximation scheme using kernel matrices that are sparse. As our kernel matrix, we use the symmetric normalized graph Laplacian matrix, used in Laplacian eigenmaps dimensionality reduction~\cite{belkin2003laplacian}. The $(i,j)$ entry of the graph Laplacian matrix is given by $\exp ( -\norm{x_i - x_j}^2_2 / \sigma )$, followed by some data-dependant normalization.
The reason we use this kernel is that for small values of $\sigma$, this kernel is essentially sparse.

In this subsection, we use real-world datasets taken from the UCI Machine Learning Repository~\cite{Dua:2019}, as described in Table~\ref{tbl:data sets}. For each dataset, we repeat the experiment described in the introduction of Section~\ref{sec:numeric_2} multiple times, each time on a random subset of 1000 points from the dataset, and for several values of~$\sigma$, reflecting the transition between a sparse matrix (small $\sigma$) and a dense matrix (large $\sigma$). The results of this experiment are presented in Figure~\ref{fig:app_mnist}. We can see that for higher Hoyer scores, which correspond to sparser matrices, the error graphs of the sparse kernel approximation scheme are lower than the error graphs of the other kernel approximation schemes. We conclude that when the kernel admits a sparse structure (typically, Hoyer score $>0.75$), the sparse kernel approximation scheme has superior performance. For lower Hoyer scores, however, the kernel is usually no longer sparse and the performance of the sparse kernel approximation scheme is no longer superior to the other kernel approximation schemes. We notice that none of the methods performs well for dense matrices, whereas for sparse matrices, the use of the sparse extension may be the difference between a nearly meaningless result and an informative one.

\begin{figure}[H]
  \centering
  \captionsetup{justification=centering}
  \begin{subfigure}[b]{0.45\linewidth}
    \includegraphics[width=\linewidth]{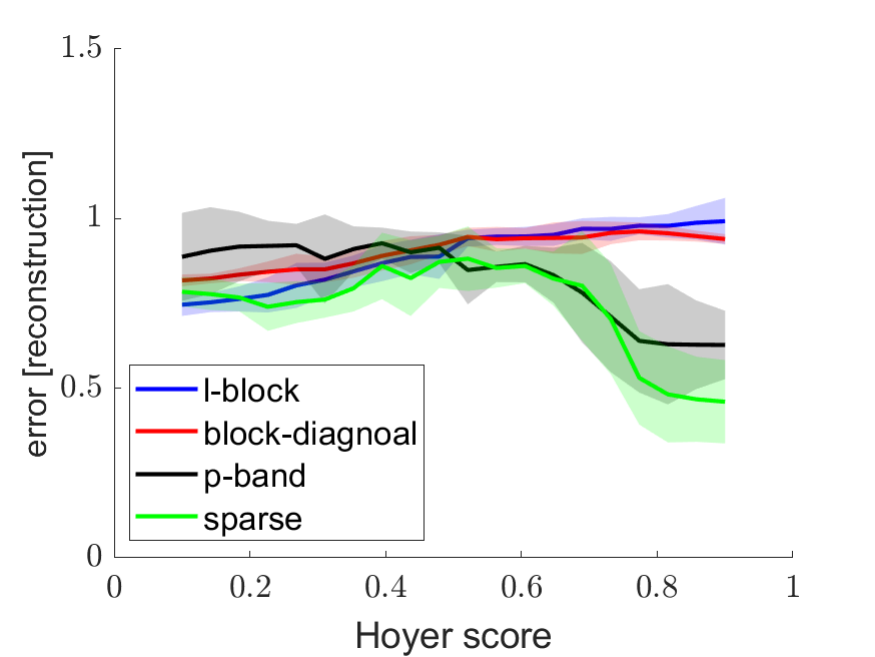}
    \caption{MNIST}
  \end{subfigure}
  \begin{subfigure}[b]{0.45\linewidth}
    \includegraphics[width=\linewidth]{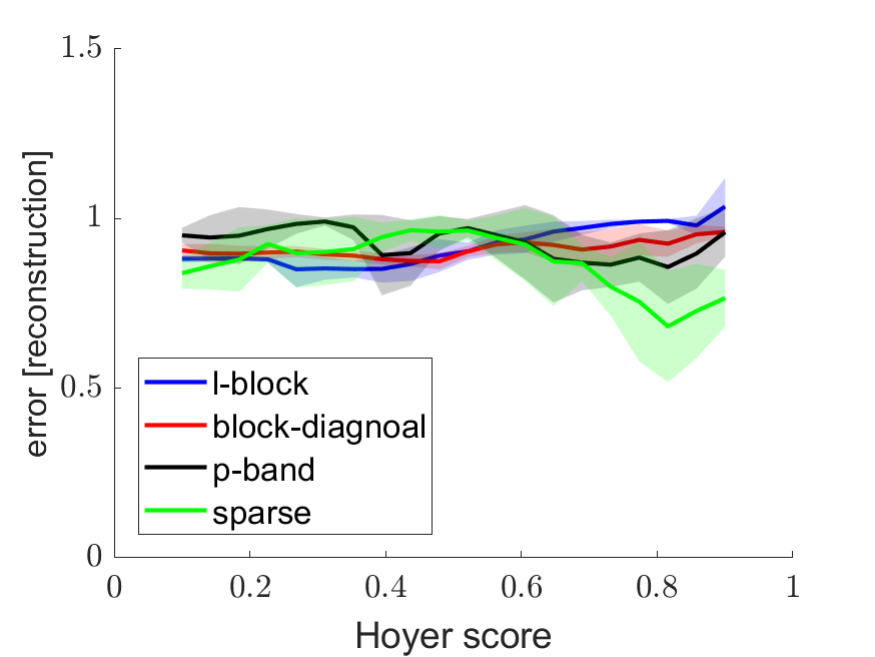}
    \caption{Superconductivity}
  \end{subfigure}
  \\
  \begin{subfigure}[b]{0.45\linewidth}
    \includegraphics[width=\linewidth]{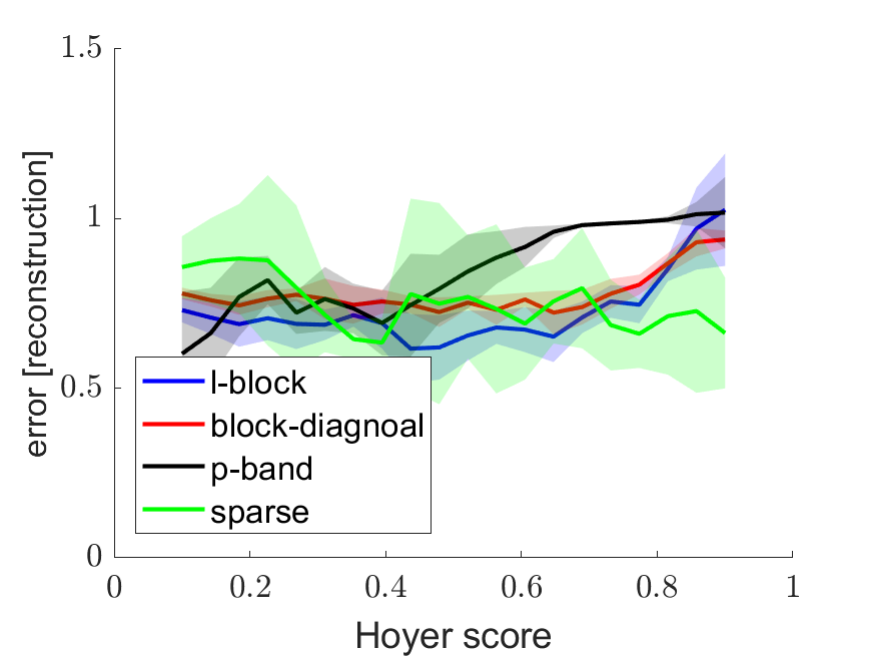}
    \caption{Poker}
  \end{subfigure}
  \begin{subfigure}[b]{0.45\linewidth}
    \includegraphics[width=\linewidth]{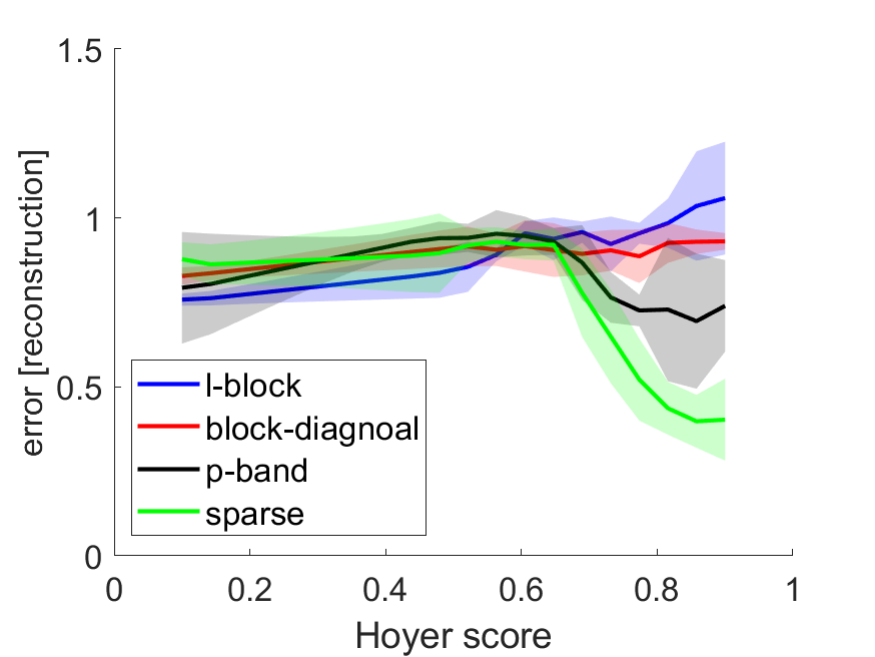}
    \caption{Wine quality}
  \end{subfigure}
  \caption{Error of the graph Laplacian approximation for various datasets and kernel approximation schemes as a function of the approximated kernel Hoyer score. We can see that for sparse kernels (higher Hoyer score, typically $>0.75$), the sparse approximation is superior.}
  \label{fig:app_mnist}
\end{figure}

\begin{table}
\centering
\begin{tabular}{|l|l|p{10cm}|}
\hline
Name              & Dimension & Description                                                                                                                                                           \\ \hline
MNIST             & 784       & Each sample is a grey scale image of a handwritten digit between zero and nine.                                                                                        \\ \hline
Superconductivity & 81        & Each sample contains 81 features extracted from one of 21263 superconductors.                                                                                                \\ \hline
Poker             & 10        & Each sample is a hand consisting of five playing cards drawn from a standard deck of 52 cards. Each card is described using two attributes (suit and rank).           \\ \hline
Wine quality      & 11        & Each sample corresponds to a variant of a Portuguese wine, where the 11 attributes are numerical characteristics of the wine such as acidity, pH, residual sugar etc. \\ \hline
\end{tabular}
\caption{Real-world datasets used.}
\label{tbl:data sets}
\end{table} 

\section{Summary and future work} \label{sec:summary}

In this paper, we propose a kernel approximation framework that is based on perturbation theory. We prove that this framework is a generalization of the popular Nystr{\"o}m method and some of its variants. Furthermore, contrary to existing error bounds for the Nystr{\"o}m method, our framework provides error bounds for the individual eigenvectors. This is useful when the approximation is used as part of a dimensionality reduction procedure. Our kernel approximation framework is very flexible, and can thus take advantage of the structure of the kernel matrix. We demonstrate our theoretical derivations numerically for kernel matrices that are either sparse or concentrated along their diagonal. Currently, our framework does not handle kernels with degenerate eigenvalues, nor crossovers between eigenvalues. The latter might cause a ``change of order" of the eigenvectors.

For a future work, one might consider schemes to construct kernel matrices with structure that can take advantage of our framework. For example, the MEKA algorithm orders the data in clusters, resulting in a block-diagonal kernel.


\newpage

\bibliography{Nys_bib}

\begin{thebibliography}{10}

\bibitem{balasubramanian2002isomap}
Mukund Balasubramanian, Eric~L Schwartz, Joshua~B Tenenbaum, Vin de~Silva, and
  John~C Langford.
\newblock The isomap algorithm and topological stability.
\newblock {\em Science}, 295(5552):7--7, 2002.

\bibitem{belkin2003laplacian}
Mikhail Belkin and Partha Niyogi.
\newblock Laplacian eigenmaps for dimensionality reduction and data
  representation.
\newblock {\em Neural computation}, 15(6):1373--1396, 2003.

\bibitem{brand2006fast}
Matthew Brand.
\newblock Fast low-rank modifications of the thin singular value decomposition.
\newblock {\em Linear algebra and its applications}, 415(1):20--30, 2006.

\bibitem{buja2008data}
Andreas Buja, Deborah~F Swayne, Michael~L Littman, Nathaniel Dean, Heike
  Hofmann, and Lisha Chen.
\newblock Data visualization with multidimensional scaling.
\newblock {\em Journal of computational and graphical statistics},
  17(2):444--472, 2008.

\bibitem{bunch1978rank}
James~R Bunch, Christopher~P Nielsen, and Danny~C Sorensen.
\newblock Rank-one modification of the symmetric eigenproblem.
\newblock {\em Numerische Mathematik}, 31(1):31--48, 1978.

\bibitem{byron2012mathematics}
Frederick~W Byron and Robert~W Fuller.
\newblock {\em Mathematics of classical and quantum physics}.
\newblock Courier Corporation, 2012.

\bibitem{cortes1995support}
Corinna Cortes and Vladimir Vapnik.
\newblock Support-vector networks.
\newblock {\em Machine learning}, 20(3):273--297, 1995.

\bibitem{Dua:2019}
Dheeru Dua and Casey Graff.
\newblock {UCI} machine learning repository, 2017.

\bibitem{gittens2013revisiting}
Alex Gittens and Michael Mahoney.
\newblock Revisiting the {Nystr{\"o}m} method for improved large-scale machine
  learning.
\newblock In {\em International Conference on Machine Learning}, pages
  567--575. PMLR, 2013.

\bibitem{halko2011algorithm}
Nathan Halko, Per-Gunnar Martinsson, Yoel Shkolnisky, and Mark Tygert.
\newblock An algorithm for the principal component analysis of large data sets.
\newblock {\em SIAM Journal on Scientific computing}, 33(5):2580--2594, 2011.

\bibitem{halko2011finding}
Nathan Halko, Per-Gunnar Martinsson, and Joel~A Tropp.
\newblock Finding structure with randomness: Probabilistic algorithms for
  constructing approximate matrix decompositions.
\newblock {\em SIAM review}, 53(2):217--288, 2011.

\bibitem{hurley2009comparing}
Niall Hurley and Scott Rickard.
\newblock Comparing measures of sparsity.
\newblock {\em IEEE Transactions on Information Theory}, 55(10):4723--4741,
  2009.

\bibitem{knyazev2012principal}
Andrew~V Knyazev and Peizhen Zhu.
\newblock Principal angles between subspaces and their tangents.
\newblock {\em arXiv preprint arXiv:1209.0523}, 2012.

\bibitem{kumar2009ensemble}
Sanjiv Kumar, Mehryar Mohri, and Ameet Talwalkar.
\newblock Ensemble {Nystr{\"o}m} method.
\newblock In {\em Advances in Neural Information Processing Systems}, pages
  1060--1068, 2009.

\bibitem{kumar2012sampling}
Sanjiv Kumar, Mehryar Mohri, and Ameet Talwalkar.
\newblock Sampling methods for the {Nystr{\"o}m} method.
\newblock {\em Journal of Machine Learning Research}, 13(Apr):981--1006, 2012.

\bibitem{langville2006updating}
Amy~N Langville and Carl~D Meyer.
\newblock Updating markov chains with an eye on google's pagerank.
\newblock {\em SIAM journal on matrix analysis and applications},
  27(4):968--987, 2006.

\bibitem{li2014large}
Mu~Li, Wei Bi, James~T Kwok, and Bao-Liang Lu.
\newblock Large-scale {Nystr{\"o}m} kernel matrix approximation using
  randomized svd.
\newblock {\em IEEE transactions on neural networks and learning systems},
  26(1):152--164, 2014.

\bibitem{mitz2019symmetric}
Roy Mitz, Nir Sharon, and Yoel Shkolnisky.
\newblock Symmetric rank-one updates from partial spectrum with an application
  to out-of-sample extension.
\newblock {\em SIAM Journal on Matrix Analysis and Applications},
  40(3):973--997, 2019.

\bibitem{ng2002spectral}
Andrew~Y Ng, Michael~I Jordan, Yair Weiss, et~al.
\newblock On spectral clustering: Analysis and an algorithm.
\newblock {\em Advances in neural information processing systems}, 2:849--856,
  2002.

\bibitem{oh2018multiple}
HyungSeon Oh and Zhe Hu.
\newblock Multiple-rank modification of symmetric eigenvalue problem.
\newblock {\em MethodsX}, 5:103--117, 2018.

\bibitem{roweis2000nonlinear}
Sam~T Roweis and Lawrence~K Saul.
\newblock Nonlinear dimensionality reduction by locally linear embedding.
\newblock {\em science}, 290(5500):2323--2326, 2000.

\bibitem{saunders1998ridge}
Craig Saunders, Alexander Gammerman, and Volodya Vovk.
\newblock Ridge regression learning algorithm in dual variables.
\newblock 1998.

\bibitem{shi2000normalized}
Jianbo Shi and Jitendra Malik.
\newblock Normalized cuts and image segmentation.
\newblock {\em IEEE Transactions on pattern analysis and machine intelligence},
  22(8):888--905, 2000.

\bibitem{si2017memory}
Si~Si, Cho-Jui Hsieh, and Inderjit~S Dhillon.
\newblock Memory efficient kernel approximation.
\newblock {\em The Journal of Machine Learning Research}, 18(1):682--713, 2017.

\bibitem{stewart1990matrix}
Gilbert~W Stewart.
\newblock Matrix perturbation theory.
\newblock 1990.

\bibitem{sun2015review}
Shiliang Sun, Jing Zhao, and Jiang Zhu.
\newblock A review of {Nystr{\"o}m} methods for large-scale machine learning.
\newblock {\em Information Fusion}, 26:36--48, 2015.

\bibitem{trefethen1997numerical}
Lloyd~N Trefethen and David Bau~III.
\newblock {\em Numerical linear algebra}, volume~50.
\newblock Siam, 1997.

\bibitem{wang2014improving}
Shusen Wang, Chao Zhang, Hui Qian, and Zhihua Zhang.
\newblock Improving the modified {Nystr{\"o}m} method using spectral shifting.
\newblock In {\em Proceedings of the 20th ACM SIGKDD international conference
  on Knowledge discovery and data mining}, pages 611--620, 2014.

\bibitem{wang2013improving}
Shusen Wang and Zhihua Zhang.
\newblock Improving {CUR} matrix decomposition and the {Nystr{\"o}m}
  approximation via adaptive sampling.
\newblock {\em The Journal of Machine Learning Research}, 14(1):2729--2769,
  2013.

\bibitem{wang2014efficient}
Shusen Wang and Zhihua Zhang.
\newblock Efficient algorithms and error analysis for the modified
  {Nystr{\"o}m} method.
\newblock In {\em Artificial Intelligence and Statistics}, pages 996--1004.
  PMLR, 2014.

\bibitem{williams2001using}
Christopher~KI Williams and Matthias Seeger.
\newblock Using the {Nystr{\"o}m} method to speed up kernel machines.
\newblock In {\em Advances in neural information processing systems}, pages
  682--688, 2001.

\end{thebibliography}
\bibliographystyle{plain}

\appendix

\section{Proof of Proposition~\ref{prop:pert_partial_1}} \label{app1}

Let $1 \leq i \leq m$. Ignoring the $O(\norm{E}^2_2)$ term, we split~\eqref{eqn:pert_org_vecs} into the known and unknown terms, resulting in
\begin{equation} \label{eq:formula_split}
\widetilde{w}_{i} = v_{i} + \sum_{k=1, k \neq i}^{m} \frac{ \langle Ev_i,v_k \rangle }{t_i-t_k}v_k + \sum_{k={m+1}}^n \frac{ \langle Ev_i,v_k \rangle }{t_i-t_k}v_k .
\end{equation}
As the second term in~\eqref{eq:formula_split} is unknown, we approximate it by replacing the unknown eigenvalues with a parameter $\mu$, resulting in
\begin{equation} \label{eq:formula_split2}
\sum_{k={m+1}}^n \frac{ \langle Ev_i,v_k \rangle }{t_i-\mu}v_k = \frac{1}{t_i-\mu}\sum_{k={m+1}}^n  \langle Ev_i,v_k \rangle v_k = \frac{1}{t_i-\mu} (Ev_i - V^{(m)}V^{(m)T}Ev_i) =  \frac{1}{t_i-\mu} r_i,
\end{equation}
where $r_{i}$ is defined in~\eqref{eqn:r}.
Formula~\eqref{eq:pert_expansion} follows by replacing the second term in~\eqref{eq:formula_split} with the rightmost term in~\eqref{eq:formula_split2}. We denote the approximation error introduced into the approximation~\eqref{eq:pert_expansion} by $e_{i}$, that is
\begin{equation*}
    e_i = \norm{\sum_{k={m+1}}^n \frac{ \langle Ev_i,v_k \rangle }{t_i-t_k}v_k -  \frac{1}{t_i-\mu} r_i}_2.
\end{equation*}
By using the identity
\begin{equation*} 
\frac{1}{t_i - t_k} = \frac{1}{t_i - \mu} + \frac{t_k - \mu}{(t_i - t_k)(t_i - \mu)},
\end{equation*}
we get
\begin{equation*} 
e_i =  \norm{\sum_{k={m+1}}^n \frac{t_k - \mu}{(t_i - t_k)(t_i - \mu)} \langle Ev_i,v_k \rangle v_k }_2.
\end{equation*}
By the triangle inequality and the Cauchy-Schwarz inequality we get
\begin{equation*} 
e_i \leq \sum_{k={m+1}}^n \frac{\abs{t_k - \mu}}{\abs{t_i - t_k}\abs{t_i - \mu}}\abs{ \langle Ev_i,v_k \rangle } \leq \frac{\norm{E}_2}{\abs{t_i - t_{m+1}}\abs{t_i - \mu}}\sum_{k={m+1}}^n \abs{t_k - \mu} .
\end{equation*}
Recalling that the original perturbation approximation~\eqref{eqn:pert_org_vecs} induces an error of $O(\norm{E}^2_2$) concludes the proof.

\section{Proof of Proposition~\ref{prop:pert_partial_2}} \label{app2}

Let $1 \leq i \leq m$. Ignoring the $O(\norm{E}^2_2)$ term, we split~\eqref{eqn:pert_org_vecs} into the known and unknown terms, resulting in
\begin{equation} \label{eq:formula_split3}
\widetilde{w}_{i} = v_{i} + \sum_{k=1, k \neq i}^{m} \frac{ \langle Ev_i,v_k \rangle }{t_i-t_k}v_k + \sum_{k={m+1}}^n \frac{ \langle Ev_i,v_k \rangle }{t_i-t_k}v_k .
\end{equation}
To obtain~\eqref{eq:pert_expansion2}, we expand the unknown (second) term in~\eqref{eq:formula_split3} by using the identity 
\begin{equation} \label{eq:id2}
\frac{1}{t_i - t_k} = \frac{1}{t_i - \mu} + \frac{t_k - \mu}{(t_i - \mu)^2} + \frac{(t_k - \mu)^2}{(t_i - t_k)(t_i - \mu)^2}.
\end{equation}
We note that
\begin{align}
\sum_{k={m+1}}^n  \langle Ev_i,v_k \rangle (t_k - \mu)v_k &= \sum_{k={m+1}}^n  \langle Ev_i,v_k \rangle t_kv_k - \mu \sum_{k={m+1}}^n  \langle Ev_i,v_k \rangle v_k \\
&= \sum_{k={m+1}}^n  \langle Ev_i,v_k \rangle A'v_k - \mu \sum_{k={m+1}}^n  \langle Ev_i,v_k \rangle v_k  \\
&= A'r_i - \mu r_i, \label{eq:err_bound}
\end{align} 
where $r_{i}$ is defined in~\eqref{eqn:r}. Using~\eqref{eq:err_bound} and~\eqref{eq:formula_split2}, we get that the unknown term in~\eqref{eq:formula_split3} can be written as
\begin{equation} \label{eq:formula_split4}
 \sum_{k={m+1}}^n \frac{ \langle Ev_i,v_k \rangle }{t_i-t_k}v_k =  \frac{1}{t_i- \mu} r_i -  \frac{\mu}{(t_i- \mu)^2} r_i +  \frac{1}{(t_i- \mu)^2} A'r_i + \sum_{k={m+1}}^n \frac{(t_k - \mu)^2}{(t_i - t_k)(t_i - \mu)^2 }  \langle Ev_i,v_k \rangle  v_k .
\end{equation}
The second-order formula~\eqref{eq:pert_expansion2} now follows by discarding the last term in~\eqref{eq:formula_split4}. Denoting the approximation error of equation~\eqref{eq:pert_expansion2} by $e_{i}$, we have
\begin{equation*}
    e_i = \norm{\sum_{k={m+1}}^n \frac{ \langle Ev_i,v_k \rangle }{t_i-t_k}v_k -  \bigg( \frac{1}{t_i-\mu} r_i -  \frac{\mu}{(t_i- \mu)^2} r_i +  \frac{1}{(t_i- \mu)^2} A'r_i \bigg) }_2 .
\end{equation*}
Using~\eqref{eq:id2} and the triangle and the Cauchy-Schwarz inequalities, we obtain
\begin{align*} 
e_i &=
 \norm{\sum_{k={m+1}}^n \frac{(t_k - \mu)^2}{(t_i - t_k)(t_i - \mu)^2} \langle Ev_i,v_k \rangle v_k  }_2 \\
&\leq  \sum_{k={m+1}}^n \frac{\abs{t_k - \mu}^2}{\abs{t_i - t_k}\abs{t_i - \mu}^2}\abs{ \langle Ev_i,v_k \rangle } \\
&\leq \frac{\norm{E}_2}{\abs{t_i - t_{m+1}}\abs{t_i - \mu}^2}\sum_{k={m+1}}^n \abs{t_k - \mu}^2 .
\end{align*}
Recalling that the original perturbation approximation~\eqref{eqn:pert_org_vecs} induces an error of $O(\norm{E}^2_2$) concludes the proof.

\section{Runtime and space complexity} \label{app:runtime}

In this section, we discuss the runtime and space complexities of formulas~\eqref{eq:pert_expansion} and~\eqref{eq:pert_expansion2}. We start with the first-order formula~\eqref{eq:pert_expansion}. The space complexity of the first-order formula is $O(mn)$, since it needs to store in memory the~$m$ leading eignevectors of $A'$. As of the runtime of the first-order formula, the computation of $r_i$ of~\eqref{eqn:r} involves the calculation of $Ev_i$ that requires $O(\text{nnz}(E))$ operations. The result is then multiplied by $V^{(m)T}$, which requires $O(mn)$ operations, and then by $V^{(m)}$, which also requires $O(mn)$ operations. Thus, the total runtime complexity for computing all $\{r_i\}_{i=1}^{m}$ is $O(m \cdot \text{nnz}(E) + m^2n)$ operations. The first-order formula also requires to compute $O(m^2)$ terms of the form $ \langle Ev_i,v_k \rangle v_k$ for $1 \leq i \neq k \leq m$. Each such term requires $O(\text{nnz}(E) + n)$ operations, resulting in a total of $O(m^2 \cdot \text{nnz}(E) + m^2n)$ operations for all eigenvectors. We conclude that evaluating the first-order formula requires a total of $O(m^2 \cdot \text{nnz}(E) + m^2n)$ operations.

The analysis of the second-order formula~\eqref{eq:pert_expansion2} is similar, except that it requires also to store~$A'$, which requires $O( \text{nnz}(A'))$ memory, and to compute $A'r_i$, which requires additional $O(m \cdot  \text{nnz}(A'))$ operations.  We conclude that evaluating the second-order formula requires a total of $O(m^2 \cdot \text{nnz}(E) + m \cdot  \text{nnz}(A') + m^2n)$ operations.

\section{Proof of Proposition~\ref{prop:order1_is_order2}} \label{app3}

Let $i \in \left \{1,\ldots,m \right \}$. Since $\mu = \delta$, we get that if $A'r_i = \delta r_i$ then the last two terms in~\eqref{eq:pert_expansion2} cancel out and~\eqref{eq:pert_expansion2} reduces to the first-order formula~\eqref{eq:pert_expansion}. Thus, in order to prove that $w_{i}^{(1)} = w_{i}^{(2)}$, it is sufficient to prove that under the settings of the proposition $A'r_i = \delta r_i$. Indeed,
\begin{align*}
    A'r_i &= A'  \Big( I - V^{(m)}V^{(m)T} \Big) Ev_i =  \Big( V^{(m)}TV^{(m)T} + \delta I \Big) \Big( I - V^{(m)}V^{(m)T} \Big) Ev_i \\
    &= \Big(  V^{(m)}TV^{(m)T} - V^{(m)}TV^{(m)T} + \delta I - \delta V^{(m)}V^{(m)T} \Big) Ev_i \\
    &= \delta \Big(  I -  V^{(m)}V^{(m)T} \Big) Ev_i  = \delta r_i.
\end{align*}

For the error, we note that the $n - m$ unknown eigenvalues of a matrix $A'$ of the form $A' = V^{(m)}TV^{(m)T} + \delta I$ are exactly $\delta$, and thus, when choosing $\mu = \delta$, the first term in~\eqref{eqn:error_trunc_1} and~\eqref{eqn:error_trunc_2} cancels out and we are left with only the $O(\norm{E}^2_2)$ term.

\section{Proof of Proposition~\ref{prop:nys_is_pert}} \label{app4}

Let $i \in \left \{1,\ldots,m \right \}$. By~\eqref{eqn:pert_extension}, for $\mu=0$ 
\begin{equation} \label{eqn:pert_nys0_proof}
\widetilde{u}_i = u_{i}^s + \sum_{k=1, k \neq i}^{m} \frac{((K - K^s)u_i^s,u_k^s)}{\lambda_i^s-\lambda_k^s}u_k^s  +  \frac{1}{\lambda_i^s} \big( I - U^{s(m)}U^{s(m)^T} \big)(K - K^s)u^s_i,
\end{equation}
and by~\eqref{eqn:pert_ext_vals},
\begin{equation} \label{eqn:pert_ext_vals2}
\widetilde{\lambda}_i = \lambda^s_i + u_i^{sT}(K - K^s)u^s_i.
\end{equation}

We first prove that $ \hat{u}_i = \sqrt{\frac{m}{n}} \widetilde{u}_i$ (see~\eqref{eqn:ev equivalence}). We start by simplifying~\eqref{eqn:pert_nys0_proof} based on the specific choice of~$K^s$. We make the following observations. First, we note that for all $i=1,\ldots,m$,  the last $n-m$ entries of $u_i^s$ are $0$, and the top left $m \times m$ submatrix of $K - K^s$ is $0$. This implies that the first $m$ entries of $(K - K^s)u_i^s$ are $0$, and so $\langle (K - K^s)u_i^s , u_k^s \rangle = 0$ for all $1 \leq i,k \leq m$. A direct consequence of the latter is that $U^{s(m)}U^{s(m)T}(K - K^s)u_i^s = 0$ for all $1 \leq i \leq m$. Thus,~\eqref{eqn:pert_nys0_proof} reduces to
\begin{equation} \label{eq:nystrom_pert_reduced}
\widetilde{u}_i = u_{i}^s  +  \frac{1}{\lambda_i^s}(K - K^s)u_i^s.
\end{equation}
Next, we note that the first term in~\eqref{eq:nystrom_pert_reduced}, $u_i^s$ , is non-zero only on its first $m$ entries, whereas the second term, $\frac{1}{\lambda_i^s}(K - K^s)u_i^s$ is non-zero only on its last $n-m$ entries. This means that the first $m$ entries of $\widetilde{u}_i$ are exactly the first $m$ entries $u_{i}^s$, and the last $n-m$ entries of $\widetilde{u}_i$ are equal to the last $n-m$ entries of $\frac{1}{\lambda_i^s}(K - K^s)u_i^s$. We start by proving the equivalence between the first~$m$ entries of $\widetilde{u}_i$ and $\hat{u}_i$. Let $1 \leq p \leq m$. Denote by $A_{p\rightarrow}$ the $p$'th row of a matrix $A$. Denote by $C$ the $n \times m$ matrix consisting of the first $m$ columns of $K$. We note that for the Nystr{\"o}m method (see~\eqref{eq:nystrom})
\begin{equation*}
\hat{u}_{i,p} = \sqrt{\frac{m}{n}} \frac{1}{\lambda_i'} C_{p \rightarrow} u_i' = \sqrt{\frac{m}{n}} \frac{1}{\lambda_i'} K'_{p \rightarrow} u_i' = \sqrt{\frac{m}{n}} \frac{1}{\lambda_i'} \lambda_i' u_{i,p}' = \sqrt{\frac{m}{n}} u_{i,p}'.
\end{equation*}
Thus, the first $m$ entries of the approximated vector in the Nystr{\"o}m method are merely a re-scaling of the vector $u_i'$ by $\sqrt{\frac{m}{n}}$. Since $K^s$ is equal to $K'$ padded with zeros, we get that the first $m$ entries of $u_i^s$ are exactly $u_i'$. Thus, we conclude that the first $m$ entries of $\hat{u}_i$ and  $\sqrt{\frac{m}{n}} \widetilde{u}_i$ are identical.

Next, we prove the equivalence between the last $n - m$ entries of $\widetilde{u}_i$ and $\hat{u}_i$. Let $m + 1 \leq p \leq n$. We have for the Nystr{\"o}m method (see~\eqref{eq:nystrom})
\begin{equation} \label{eq:part_of_proof1}
\hat{u}_{i,p} = \sqrt{\frac{m}{n}} \frac{1}{\lambda_i'} C_{p \rightarrow} u_i' ,
\end{equation}
and for the perturbation approximation, by using~\eqref{eq:nystrom_pert_reduced}, and since the last $n-m$ entries of $u_{i}^{s}$ are~0,
\begin{equation} \label{eq:part_of_proof2}
\widetilde{u}_{i,p} = \frac{1}{\lambda_i^s}(K - K^s)_{p \rightarrow} u_i^s = \frac{1}{\lambda_i^s}(K_{p \rightarrow} u_i^s - 0) = \frac{1}{\lambda_i^s}C_{p \rightarrow} u_i',
\end{equation}
where the last equality follows since (as explained above) the first $m$ entries of $u_i^s$ are exactly~$u_i'$. Finally, since $\lambda_i' = \lambda_i^s$ for $1 \leq i \leq m$, we conclude by~\eqref{eq:part_of_proof1} and~\eqref{eq:part_of_proof2} that $\hat{u}_{i,p} = \sqrt{\frac{m}{n}} \widetilde{u}_{i,p}$ for $m+1 \leq p \leq n$, that is, the last $n - m$ entries of $\hat{u}_i$ and  $\sqrt{\frac{m}{n}} \widetilde{u}_i$ are also identical.

We now prove the equivalence of the eigenvalues. By the same arguments as above, we note that for all $1 \leq i \leq m$, $u_i^{sT}(K - K^s)u^s_i = 0$. Thus, by~\eqref{eqn:pert_ext_vals2} we have $\widetilde{\lambda}_i = \lambda_i^s $ and consequently, using~\eqref{eq:nystrom_vals},
\begin{equation*}
    \hat{\lambda}_i = \frac{n}{m} \lambda_i' = \frac{n}{m} \lambda_i^s = \frac{n}{m} \widetilde{\lambda}_i ,
\end{equation*}
as required. 

\section{Proof of Proposition~\ref{prop:ss_nys_is_pert}} \label{app5}

Denote by $K_{\text{shift}}^s \in \mathbb{R}^{n \times n}$ the top left $m \times m $ submatrix of $K_{\text{shift}}$ (see~\eqref{eq:shift}) padded with zeros. By the equivalence of the Nystr{\"o}m method and the perturbation approximation proved in Proposition~\ref{prop:nys_is_pert}, the spectral shifted Nystr{\"o}m method is equivalent to the perturbation approximation of $K_{\text{shift}}^s$  using $\mu=0$. Thus, it suffices to prove that the perturbation approximation of the eigenpairs of $K_{\text{shift}}^s$ to the eigenpairs of $K_{\text{shift}}$ with $\mu = 0$ equals to the perturbation approximation of the eigenpairs of $K^s$ to the eigenpairs of~$K$ with $\mu = \delta$.

Let the top $m$ eigenpairs of $K^s$ be $\{(\lambda_i^s, u_i^s))\}_{i=1}^{m}$. It follows that the top $m$ eigenpairs of $K^s_{\text{shift}}$ are~$\{(\lambda_i^s - \delta, u_i^s))\}_{i=1}^{m}$. Let $1 \leq i \leq m$. By~\eqref{eq:nystrom_pert_reduced}, the perturbation approximation~\eqref{eqn:pert_extension} of the eigenvectors of $K_{\text{shift}}^s$ to eigenvectors of $K_{\text{shift}}$ with $\mu = 0$ reduces to
\begin{equation*}
\widetilde{u}_i = u_{i}^s  +  \frac{1}{\lambda_i -\delta - 0}(K_{\text{shift}} - K_{\text{shift}}^s)u_i^s, 
\end{equation*}
whereas the perturbation approximation of the eigenvectors of $K^s$ to eigenvectors of $K$ with $\mu = \delta$ reduces to
\begin{equation*}
\hat{u}_i = u_{i}^s  +  \frac{1}{\lambda_i -\delta}(K - K^s)u^s_i.
\end{equation*}
But since the last $n-m$ entries of $u_i^s$ are 0, and the entries of $K_{\text{shift}} - K_{\text{shift}}^s$ are equal to those of $K - K^s$ except for the last $n-m$ diagonal elements, we have that
\begin{equation} \label{eq:eq_same}
(K_{\text{shift}} - K_{\text{shift}}^s)u_i^s = (K - K^s)u_i^s ,
\end{equation}
and we conclude that $\widetilde{u}_i = \hat{u}_i$.

For the eigenvalues, the perturbation approximation of the eigenvalues of $K_{\text{shift}}^s$ to the eigenvalues of $K_{\text{shift}}$ (see~\eqref{eqn:pert_ext_vals}) yields 
\begin{equation*}
\widetilde{\lambda}_i = (\lambda_i^s - \delta) + u_i^s (K_{\text{shift}} - K_{\text{shift}}^s)u_i^s .    
\end{equation*}
We note that by~\eqref{eqn:pert_ext_vals},~\eqref{eq:eq_same} and by the proof of this section for the eigenvectors, if $\{ (\tau_i, v_i) \}_{i=1}^m$ are the approximated eigenpairs of a matrix $A$, then $\{ (\tau_i + \delta, v_i) \}_{i=1}^m$ are the approximated eigenpairs of the matrix $A_{\text{shift}} = A + \delta I$. Thus, in order to recover the approximation of the eigenvalues of $K$, we need to shift the approximated eigenvalues $\{ \widetilde{\lambda}_i \}_{i=1}^m$ back by $\delta$, yielding

\begin{equation*}
    \widetilde{\lambda}_i' = \lambda_i^s + u_i^s (K_{\text{shift}} - K_{\text{shift}}^s)u_i^s.
\end{equation*}
On the other hand, the perturbation approximation of $K^s$ yields
\begin{equation*}
    \hat{\lambda}_i = \lambda_i^s + u_i^s (K - K^s)u_i^s.
\end{equation*}
By~\eqref{eq:eq_same}, we have that $\widetilde{\lambda}_i' = \hat{\lambda}_i$.

\section{Performance of various error metrics} \label{app6}

In this section, we provide the results of Section~\ref{sec:num_sparse} for the MNIST and wine datasets for three different error measures, demonstrating that the performance of our approximation scheme is qualitatively similar in various reasonable error metrics. The results are presented in Figure~\ref{fig:all_metrics_ok_1} and in Figure~\ref{fig:all_metrics_ok_2} for the MNIST and wine datasets respectively. 

\begin{figure}[H]
  \centering
  \captionsetup{justification=centering}
  \begin{subfigure}[b]{0.3\linewidth}
    \includegraphics[width=\linewidth]{mnist_10_pct_recon.pdf}
    \caption{Reconstruction error}
  \end{subfigure}
  \begin{subfigure}[b]{0.3\linewidth}
    \includegraphics[width=\linewidth]{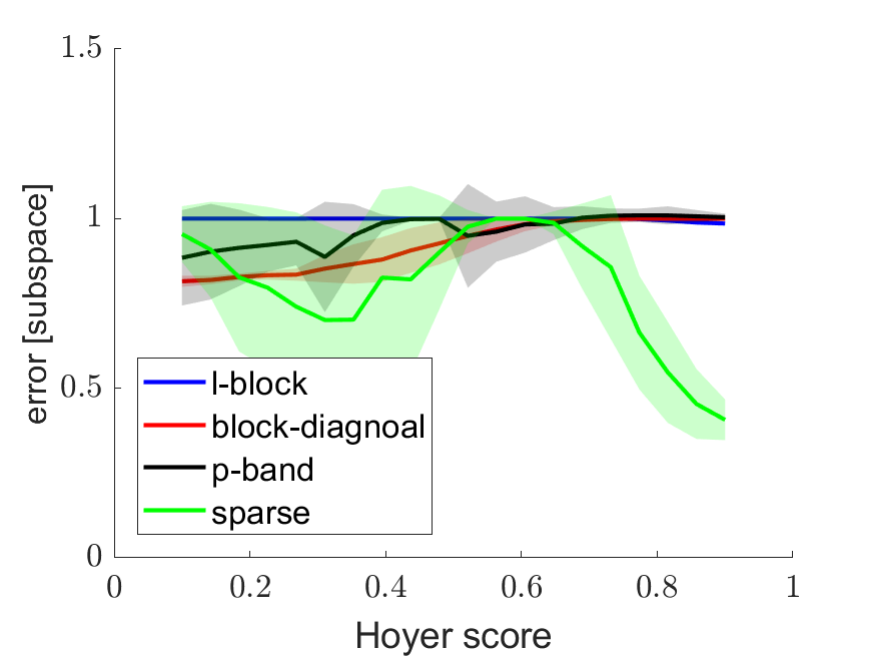}
    \caption{Subspace projection error}
  \end{subfigure}
  \begin{subfigure}[b]{0.3\linewidth}
    \includegraphics[width=\linewidth]{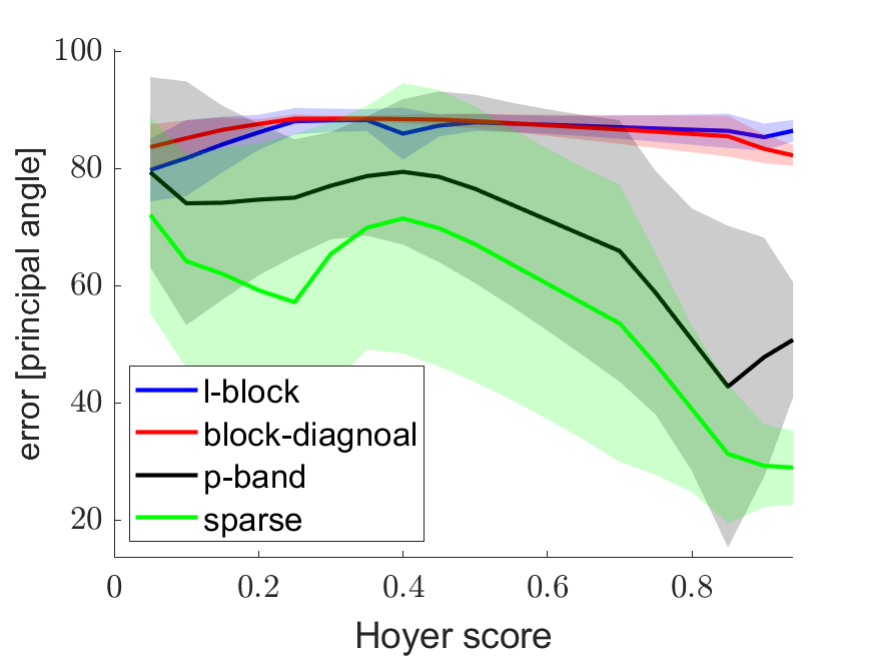}
    \caption{Principle angle}
  \end{subfigure}
  \caption{Approximation error for the MNIST dataset in various error metrics.}
  \label{fig:all_metrics_ok_1}
\end{figure}

\begin{figure}[H]
  \centering
  \captionsetup{justification=centering}
  \begin{subfigure}[b]{0.3\linewidth}
    \includegraphics[width=\linewidth]{wine_10_pct_recon.pdf}
    \caption{Reconstruction error}
  \end{subfigure}
  \begin{subfigure}[b]{0.3\linewidth}
    \includegraphics[width=\linewidth]{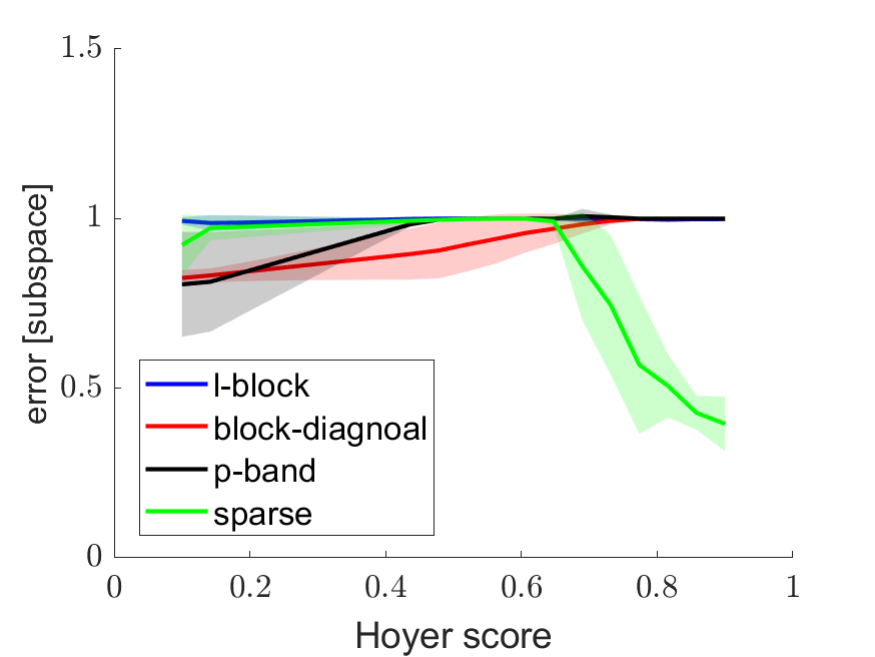}
    \caption{Subspace projection error}
  \end{subfigure}
  \begin{subfigure}[b]{0.3\linewidth}
    \includegraphics[width=\linewidth]{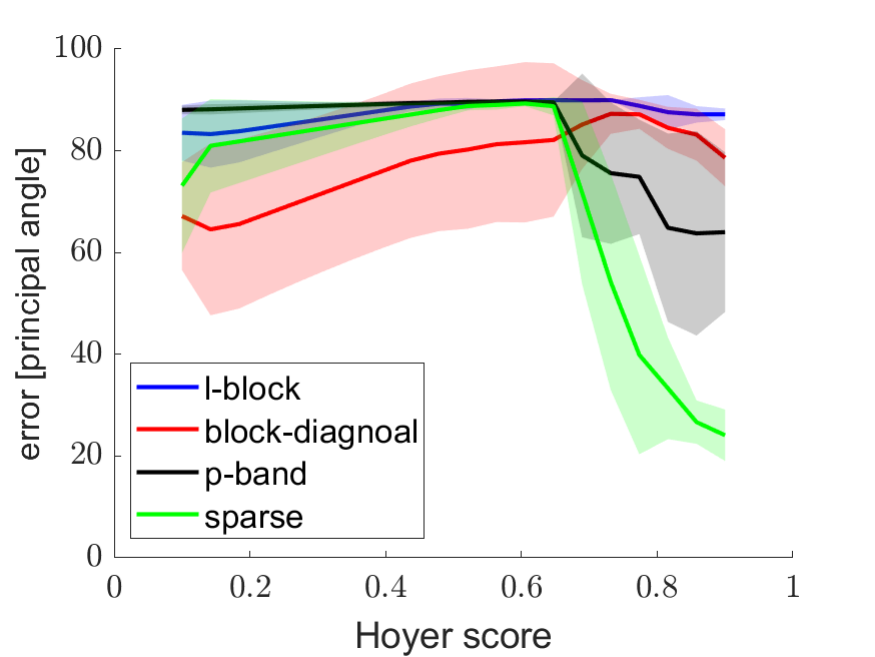}
    \caption{Principle angle}
  \end{subfigure}
  \caption{Approximation error for the wine dataset in various error metrics.}
  \label{fig:all_metrics_ok_2}
\end{figure}

\vskip 0.2in

\end{document}